\documentclass{article}

\PassOptionsToPackage{numbers,compress}{natbib}

\usepackage[main, preprint]{neurips_2026}

\usepackage{newtxtext,newtxmath}
\usepackage[T1]{fontenc}

\DeclareRobustCommand{\VAN}[3]{#2}
\let\VANthebibliography\thebibliography
\def\thebibliography{\DeclareRobustCommand{\VAN}[3]{##3}\VANthebibliography}

\usepackage{graphicx}

\usepackage{amsmath}
\usepackage{amssymb}
\usepackage{amsthm}
\usepackage{hyperref}
\usepackage[capitalise]{cleveref}
\usepackage{bm}
\usepackage{booktabs}
\usepackage{url}
\usepackage{subcaption}
\usepackage{xcolor}

\usepackage{algorithm}
\usepackage{algpseudocode}

\usepackage{tcolorbox}
\tcbset{
  algblock/.style={
    enhanced,
    boxrule=0pt,
    colback=blue!4,
    arc=2pt,
    left=2pt,right=2pt,top=2pt,bottom=2pt,
    boxsep=1pt
  }
}

\definecolor{cNoise}{RGB}{120,120,120}
\definecolor{cGen}{RGB}{30,90,200}
\definecolor{cPos}{RGB}{20,140,70}
\definecolor{cNeg}{RGB}{200,60,60}
\definecolor{cNeg1}{RGB}{160,90,10}
\definecolor{cOp1}{RGB}{160,90,10}
\definecolor{cOp}{RGB}{217,46,129}
\definecolor{cComment}{RGB}{110,110,110}

\definecolor{cChange}{RGB}{0,110,130}

\newcommand{\gen}[1]{\textcolor{cGen}{\ensuremath{#1}}}

\newcommand{\negv}[1]{\textcolor{cNeg1}{\ensuremath{#1}}}

\newcommand{\cmt}[1]{\textcolor{cComment}{\footnotesize\itshape #1}}

\algrenewcommand\algorithmiccomment[1]{\hfill \cmt{#1}}

\DeclareMathOperator{\sech}{sech}
\DeclareMathOperator{\sgn}{sgn}
\DeclareMathOperator{\tr}{tr}

\DeclareMathOperator{\diag}{diag}

\newcommand{\parvec}{\boldsymbol{\theta}}

\newcommand{\mat}[1]{\ensuremath{\bm{\mathsf{#1}}}}
\newcommand{\gmat}[1]{\ensuremath{\bm{#1}}}

\theoremstyle{plain}
\newtheorem{proposition}{Proposition}

\newtheorem{corollary}{Corollary}
\theoremstyle{remark}
\newtheorem*{remark}{Remark}

\title{The Degeneracy Distillery}

\author{%
T. Lucas Makinen\textsuperscript{1,2}\thanks{E-mail: \texttt{tlm41@cam.ac.uk}}\qquad
Deaglan J. Bartlett\textsuperscript{3,4}\qquad 
Niall Jeffrey\textsuperscript{5,6}\qquad 
Benjamin D. Wandelt\textsuperscript{7,8}
\\[0.6em]
{\normalfont\small
\begin{minipage}{0.96\textwidth}
\centering
\textsuperscript{1}Department of Applied Mathematics and Theoretical Physics, University of Cambridge, Wilberforce Rd, Cambridge CB3 0WA, United Kingdom\\
\textsuperscript{2}Imperial Centre for Inference and Cosmology (ICIC), Imperial College London, Prince Consort Road, London SW7 2AZ, United Kingdom\\
\textsuperscript{3}Astrophysics, University of Oxford, Oxford OX1 3RH, United Kingdom\\
\textsuperscript{4}CNRS \& Sorbonne Universit\'e, Institut d'Astrophysique de Paris (IAP), UMR 7095, 98 bis bd Arago, F-75014 Paris, France\\
\textsuperscript{5}Department of Physics and Astronomy, University College London, London WC1E 6BT, United Kingdom\\
\textsuperscript{6}Department of Physics \& King's Institute for Artificial Intelligence, King's College London, London WC2R 2LS, United Kingdom\\
\textsuperscript{7}Department of Physics and Astronomy, Johns Hopkins University, Baltimore, MD 21218, USA\\
\textsuperscript{8}Department of Applied Mathematics and Statistics, Johns Hopkins University, Baltimore, MD 21218, USA
\end{minipage}
}
}

\begin{document}
\maketitle

\begin{abstract}
When two or more parameters or labels produce similar data, they are
\textit{degenerate}, or hard to distinguish.
Degeneracies render both label prediction and inverse problems difficult,
since both machine learning algorithms and probabilistic samplers rely on the
distinguishability of data and its gradients with respect to parameters.
However, identifying degeneracies in physical models or real-world datasets
can be elucidating about the choice of model or the underlying process that
produces the data.
We present the \textit{degeneracy distillery}, a method that
(1) detects and (2) resolves degenerate parameter combinations
(a) automatically and (b) symbolically, from parameter-data
(or parameter-simulation) pairs alone, through estimation and flattening of
the Fisher information matrix. By exploring the information geometry of the likelihood, we characterize degeneracies as an intrinsic property of the physical model, requiring no realised data observation.
We demonstrate our approach on a range of synthetic and real-world problems,
discovering symbolic coordinate transformations that identify the
combinations of parameters of a model which yield independent effects on the
data. The resulting coordinates flatten the Fisher information in expectation \emph{globally},
in contrast to posterior-based methods that flatten only at a single point, and substantially reduce the simulation budget required for downstream neural posterior estimation. In test cases we require up to $10\times$ fewer simulations for posterior estimation at matched validation calibration whilst simultaneously gaining physical insight on the system.

\end{abstract}


\section{Introduction}
\label{sec:intro}

Many scientific models are parametrised by a set of variables, yet their
predictions often depend on these in a highly nontrivial way, with different
parameter variations producing similar or indistinguishable effects.
As a result, variations in individual parameters can produce correlated
effects on observables, with the model responding primarily to particular
combinations of parameters rather than to each independently.
In such situations, we say that there are \textit{degeneracies} in
parameter space.

Identifying these directions is important for both statistical and practical
reasons. From an inference perspective, aligning coordinates with directions
of sensitivity improves conditioning, making likelihood or posterior
exploration and optimisation significantly more efficient. It also provides
physical insight by revealing the combinations of parameters that actually
control observables, rather than relying on arbitrary coordinate choices. By
resolving the true directions along which predictions change, one can
capture parameter-data relationships more precisely and avoid redundancies.
This has direct implications for computational cost: simulations can be
targeted along informative directions rather than wasted along degeneracies
that produce indistinguishable outputs. Finally, such structure enables the
construction of more informed priors, defined in terms of meaningful
parameter combinations rather than poorly aligned individual parameters.

Such behaviour is naturally captured by the Fisher information, which
defines a geometry over parameters that is typically highly anisotropic and
ill-conditioned. In settings with analytic control, one can sometimes
identify appropriate reparametrisations by hand. However, in more complex
or computationally intensive models this becomes infeasible, motivating the
need for systematic, data-driven approaches. In this work, we develop a
method to discover global reparametrisations that regularise this geometry,
yielding coordinates in which the Fisher information is approximately
isotropic across parameter space.

The \textit{degeneracy distillery} maps the information geometry of a system
from parameter-data pairs and discovers a symbolic parametrisation in terms
of the system's original parameters for which the Fisher matrix is as close
to the identity as possible. As outlined in \cref{fig:pipeline}, we achieve
this in three main steps:
(i) Estimate the information geometry everywhere with neural networks;
(ii) Learn neural coordinates that flatten the Fisher geometry and ensure a
unique solution;
(iii) Find nonlinear symbolic expressions that approximate these neural
coordinates and collect like terms.
The output of the distillery is therefore a coordinate transformation that
removes the degeneracies inherent in the original parametrisation.

\paragraph{Contributions.}
{(i) We introduce a three-stage pipeline that infers the Fisher metric
globally over parameter space from simulations alone, and uses it to learn a
symbolic, invertible reparametrisation that flattens the Fisher information.
(ii) We provide ensemble estimators for both the Fisher matrix and the
flattened coordinates, with an alignment scheme that fixes the residual
constant offset and orthogonal symmetries of the flattening loss
(see \cref{app:unique sol}).
(iii) We validate the method on two synthetic problems with known geometry
(Rosenbrock and a one-dimensional Gaussian), where analytic flat coordinates and
geodesic normal
coordinates serve as benchmarks.
(iv) We apply the distillery to four scientific problems---SIR epidemic
dynamics, gravitational-wave inspiral waveforms, weak-lensing cosmology from expensive dark matter simulations, and an industrial heater control problem---each chosen to probe a different regime of
nonlinearity, dimensionality, and prior knowledge.
(v) We demonstrate that the resulting coordinates lower the simulation
budget required by downstream neural posterior estimation while preserving
calibrated coverage.}

We briefly review the related literature in \cref{sec:related}, then
describe the principles of information geometry in \cref{sec:theory}.
We detail the degeneracy distillery pipeline in \cref{sec:method}, before
validating it against synthetic examples and applying it to scientific
problems in \cref{sec:experiments}. We discuss limitations and conclude in
\cref{sec:discussion}. Full
experimental, architectural, and theoretical details are deferred to the
appendices.

\begin{figure}
    \centering
    \includegraphics[width=\textwidth]{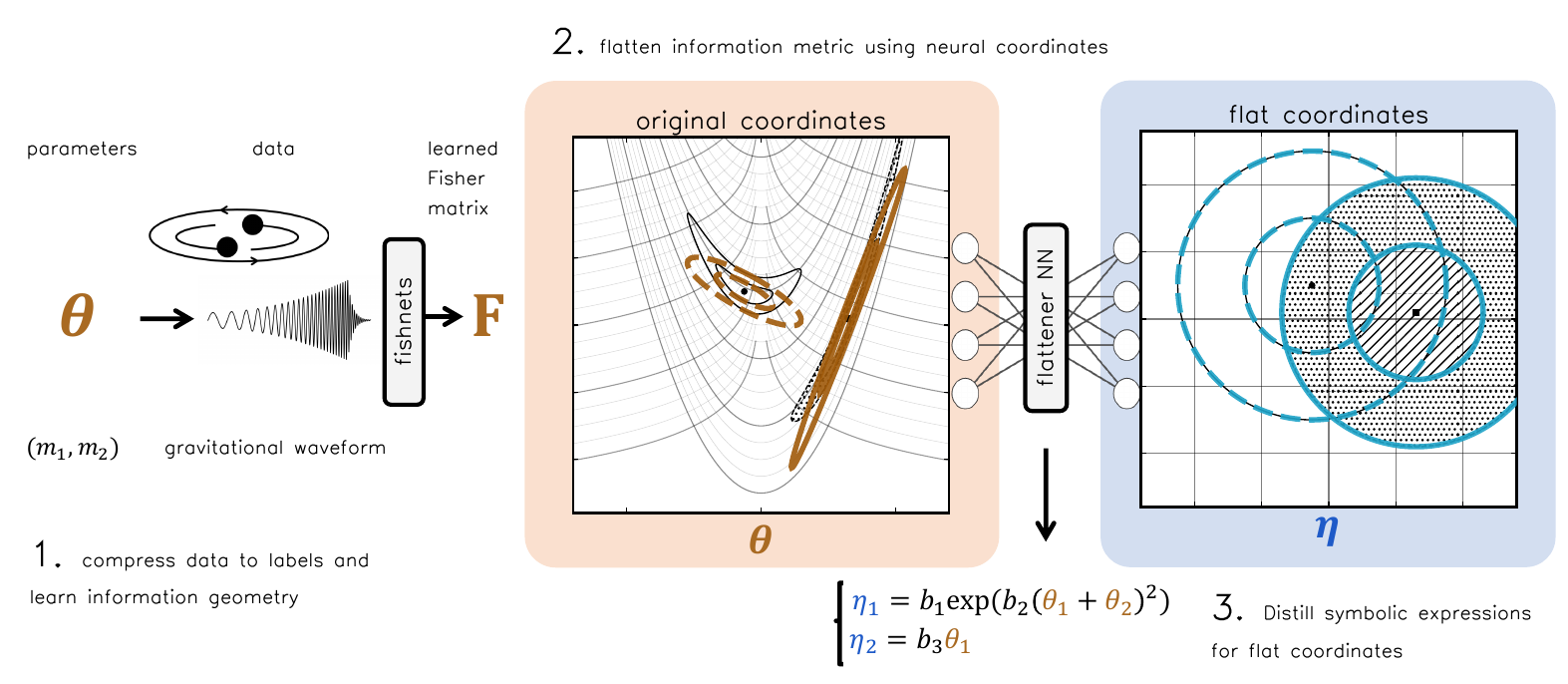}
    \caption{Degeneracy distillery pipeline in three steps.
    (1)~Parameter-data pairs $(\negv{\theta}, \textbf{x})$ are passed to an
    ensemble of Fishnet networks to learn approximations to the Fisher
    matrix at each parameter value.
    (2)~A ``flattener'' network is trained so that its Jacobian maps the
    learned metric to the identity as a function of $\negv{\theta}$.
    (3)~Symbolic regression is performed on each output $\gen{\eta}$
    coordinate to obtain short, nonlinear, degeneracy-resolving expressions.
    }
    \label{fig:pipeline}
\end{figure}

\section{Related Work}
\label{sec:related}

\paragraph{Sloppy models.}
``Sloppy'' models describe a common feature of multiparameter nonlinear
systems: predictions depend strongly on a few parameter combinations while
remaining insensitive to many others. This is reflected in a Fisher
information matrix with eigenvalues spanning many orders of magnitude,
producing a highly anisotropic parameter geometry and a ``hyperribbon''
model manifold \citep{Gutenkunst_2007,Transtrum_2011,Transtrum_2015}.
\citet{Gutenkunst_2007} interpret the Fisher information as a Riemannian
metric, using its eigenstructure to distinguish stiff and sloppy directions
and explain how accurate predictions coexist with poorly constrained
parameters. Exact or near degeneracies are handled explicitly, either as
directions of vanishing sensitivity or via reductions such as the manifold
boundary approximation method (MBAM), which collapse these directions at
manifold boundaries \citep{Transtrum_2014}.

\paragraph{Local conformal autoencoders and intrinsic coordinates.}
The LOCA method \citep{Peterfreund_2020} and \citet{Evangelou_2021} also
construct intrinsic coordinates that remove redundancies from nonlinear
observation maps, but without explicit use of likelihood-based geometry.
LOCA enforces local whitening in data space, normalising the pull-back
metric and producing coordinates invariant (up to conformal factors) under
smooth observation transformations. \citet{Evangelou_2021} instead identify
parameter combinations that determine outputs by analysing the
parameter-to-observable map, separating predictive directions from level
sets. Both are data-driven and focus on removing non-identifiable directions
rather than flattening an information metric. They yield coordinates adapted
to the model manifold, whereas our approach seeks to regularise the Fisher
geometry over parameter space.

\paragraph{Geometric variational inference and posterior reparametrisation.}
Geometric variational inference (geoVI) \citep{Frank_2021} also uses the
Fisher information, but locally: it constructs posterior-specific coordinates
that flatten the geometry near the posterior, enabling Gaussian
approximations. Our approach instead provides a global, model-intrinsic
reparametrisation independent of data.
Similarly, \citet{Dacunha_2022} learn global density models to derive the local
Fisher geometry and decompose parameters into constrained and unconstrained
directions. This is also posterior-specific, whereas our method is global;
additionally, their coordinates are neural, while ours use explicit analytic
forms via symbolic regression.

\paragraph{Symbolic regression and information-geometry priors.}
{The symbolic component of our pipeline builds on \textsc{operon}
\citep{Burlacu_2020,Kronberger_2024} and on minimum-description-length
selection criteria for symbolic regression
\citep{Bartlett_2024_ESR,Bartlett_2023_priors}. Ours is, to our knowledge,
the first method to combine simulation-based Fisher estimation with a
global, symbolic flattening of the resulting metric.}

\section{Theoretical Background}
\label{sec:theory}

Consider a model that produces $n_\textbf{x}$ data points $\textbf{x}$,
controlled by $n_\theta$ parameters $\negv{\theta}$. The likelihood,
$p(\textbf{x} | \negv{\theta})$, describes the probability distribution of
data that is produced at a particular parameter value. A higher likelihood
value indicates a better description of the model, via $\negv{\theta}$, of
the data. The Kullback--Leibler (KL) divergence between two points,
$\negv{\theta}_1$ and $\negv{\theta}_2$, describes how much more likely
the choice of $\negv{\theta}_1$ describes the data than $\negv{\theta}_2$:
\begin{equation}
    D_{\rm KL}(\negv{\theta}_1 \,||\, \negv{\theta}_2) =
    \int {\rm d}\textbf{x}\ p(\textbf{x} | \negv{\theta}_1)
    \left(\log p(\textbf{x} | \negv{\theta}_1) - \log p(\textbf{x} | \negv{\theta}_2)\right).
\end{equation}
This does not satisfy the triangle inequality, and is not symmetric, making it a poor choice of distance metric. However,
\citet{rao1945information}'s insight came from considering a very small
deviation from a point, $\negv{\theta} + \delta\negv{\theta}$. Here, the KL
divergence reduces to:
\begin{equation}\label{eq:rao-KL}
    D_{\rm KL}(\negv{\theta} \,||\, \negv{\theta} + \delta\negv{\theta}) =
    \frac{1}{2}F_{ab}\,\delta\negv{\theta}^a\,\delta\negv{\theta}^b + \mathcal{O}(\delta\negv{\theta}^3),
\end{equation}
where the \textit{Fisher Information Matrix}, $\mat{F}$, is defined as:
\begin{equation}
    F_{ab}(\negv{\theta}) = -\int {\rm d}\textbf{x}\ p(\textbf{x} | \negv{\theta})\,
    \frac{\partial}{\partial \negv{\theta}^a}\frac{\partial}{\partial \negv{\theta}^b}
    \log p(\textbf{x} | \negv{\theta}),
\end{equation}
where we use the Einstein summation convention. The Fisher Matrix is
positive semi-definite, and transforms properly under smooth diffeomorphisms, making it an ideal candidate for a Riemannian metric.
This metric underpins the underlying \textit{information geometry} of the
likelihood \citep{nielsen_elementary_2020, rao1945information}, and is controlled by the parameters $\negv{\theta}$ via the
expectation over the data generated by the model. The Fisher matrix is not
invariant to the way we choose to parametrise our system. Suppose we have
some parametrisation in terms of the parameters $\{\negv{\theta}^a\}$, for
which we have the Fisher matrix $\mat{F}$. If we now change to a new
parametrisation in terms of $\{\gen{\eta}^a\}$ then we have the new Fisher
matrix
\begin{equation}\label{eq:metric_transform}
    F_{ab}^\prime =
    \frac{\partial \negv{\theta}^c}{\partial \gen{\eta}^a}
    \frac{\partial \negv{\theta}^d}{\partial \gen{\eta}^b}\,F_{cd}.
\end{equation}
If all the parameters of a system acted completely independently, then we
would have a diagonal Fisher matrix. However, in general, this is not the
case, since some of the parameters are \textit{degenerate}: changing the
value of one of them changes the data in a similar way to changing another.
In terms of the Fisher information, a statistical degeneracy is a
parameter-space direction $\delta\negv{\parvec}$ with vanishing or very small
curvature, i.e.\ $\delta\negv{\parvec}^\top\mat{F}\,\delta\negv{\parvec} \approx 0$, where
$\mat{F}$ has small eigenvalues.

Ideally, one would find a parametrisation in $\gen{\bm{\eta}}$ that makes the Fisher
matrix diagonal everywhere in parameter space, and hence all degeneracies
would be removed.
It is even more desirable to have a parametrisation in which $\mat{F}$ is
equal to the identity, such that a change of unity in any of the
$\gen{\bm{\eta}}$ parameters has a similar effect on the likelihood. 

Utilising the analogy with geometry, this corresponds to finding a coordinate
system (parametrisation) in which the metric (Fisher matrix) is as close to
the identity matrix as possible.
In some special cases, it is possible to do this exactly everywhere, which
in our geometry analogy corresponds to a system with zero curvature (the
Riemann tensor is zero \citep{lee_introduction_2018}).
In general, however, there will be some curvature, and thus we can only aim
to make our Fisher matrix as flat as possible, even if we cannot completely
remove all degeneracies or curvature.
Our goal is therefore to find such a (symbolic) coordinate transformation
automatically from data--parameter pairs, for which the average deviation
from flatness across the prior range of the parameters is minimised.

\section{Method}
\label{sec:method}

The pipeline is summarised in \cref{fig:pipeline}.
We separate the construction into three stages so that each can be assessed independently.

\paragraph{1.~Mapping information geometry with Fishnets.}
Given a set of parameter-data pairs, $\{(\negv{\bm{\theta}}, \textbf{x})\}$, we
first wish to estimate the Fisher matrix at all points in parameter space.
Traditionally, this is done by running new simulations at perturbed values
of the parameters and using finite differences, or by leveraging
auto-differentiation. The former can be costly, and the latter is only
possible when one has a differentiable pipeline.
To circumvent these issues, we use the Fishnets formalism
\citep{Makinen_fishnets_2023}, which we briefly review here. Given the data vector $\textbf{x}$, we train a compression network to predict
estimates of the parameters, $\hat{\bm{\theta}}(\textbf{x})$, as well as the
Fisher matrix, $\mat{F}(\textbf{x})$. Both functions are controlled by a set
of model weights, $\varphi$. To ensure a positive-definite matrix, the
network does not directly output $\mat{F}$, but instead its Cholesky
decomposition, $\mat{L}$, such that $\mat{F} = \mat{L}\mat{L}^{\rm T}$.

Our optimisation objective is
\begin{equation}
    L_{\rm fish}(\varphi) =
    \tfrac{1}{2}\!\left(\negv{\bm{\theta}} - \hat{\bm{\theta}}(\textbf{x})\right)^{\!\top}
    \mat{F}(\textbf{x})
    \left(\negv{\bm{\theta}} - \hat{\bm{\theta}}(\textbf{x})\right)
    - \tfrac{1}{2}\log\det \mat{F}(\textbf{x}).
\end{equation}
This objective is minimised 
over batches of pairs $(\negv{\bm{\theta}}, \textbf{x})$:
\begin{equation}
\label{eq:lossfn}
    \min_{\varphi} L_{\rm fish}
    = \min_{\varphi}\int\! {\rm d}\textbf{x}\,{\rm d}\negv{\parvec}\,
    L_{\rm fish}\,p(\textbf{x}, \negv{\parvec}),
\end{equation}
which can be seen as a variational minimisation over parameter space of
\cref{eq:rao-KL}. We show in \cref{app:min_post_cov} that minimising this
objective yields the posterior mean and (inverse) covariance, which
asymptotically approaches the underlying Fisher information.

In practice, the Fishnets prediction is a noisy estimator of the ``true''
underlying Fisher information. To compensate, we train an ensemble of
networks. For each member $i$, we re-evaluate \cref{eq:lossfn} on a
validation set, obtaining a value $L_i$, and form a validation-weighted
mean and uncertainty
\begin{equation}
    \bar{\mat{F}} = \sum_{i} w_i\,\mat{F}^{(i)},
    \quad
    \delta F_{ab} = \sqrt{\sum_{i} w_i\!\left(F_{ab}^{(i)} - \bar{F}_{ab}\right)^{\!2}},
    \quad
    w_i = \frac{\exp(-L_i)}{\sum_{j}\exp(-L_{j})},
\end{equation}
on the individual entries of the Fisher matrix.
We do not consider any covariance between the elements of $\mat{F}$.
To ensure sufficient diversity, we vary hidden size over a wide uniform
range and choose activation functions at random from a library of options
(\cref{app:architecture}). We find that $10$--$20$ networks
produce Gaussian errors on the entries of $\mat{F}$.

\paragraph{2.~Neural coordinates.}
Once we have an estimate for the Fisher matrix at all points in parameter
space, we wish to find a new set of coordinates for which the Fisher is
close to the identity everywhere.
We parametrise this coordinate transformation with a neural network which
takes the original coordinate values as input and outputs the transformed
coordinates, $\gen{\bm{\eta}} = \gen{\bm{\eta}}_{\rm NN}(\negv{\bm{\theta}})$.
One can take the derivative of this network with respect to the inputs to
obtain its Jacobian
$\mat{{J}} = [\partial \gen{\bm{\eta}}/\partial \negv{\bm{\theta}}]$.
We use the pseudo-inverse of $\mat{\mathcal{J}}$ to transform $\bar{\mat{F}}$
to its value in the new coordinates,
$\bar{\mat{F}}^\prime$, using \cref{eq:metric_transform}.

To train the network, we compute the Frobenius norm between
$\bar{\mat{F}}^\prime$ and the identity:
\begin{equation}\label{eq:flat_loss}
    \ell(\mat{A}) =
    \big\|{A}_{ab} - \delta_{ab}\big\|_{\mathcal{F}}^{2}
    \equiv \tr\!\left((\mat{A}-\mat{1})(\mat{A}-\mat{1})^{\rm T}\right).
\end{equation}
We additionally penalise the inverse, so that the loss is
$L(\gen{\bm{\eta}}) = \ell(\bar{\mat{F}}^\prime) + \ell(\bar{\mat{F}}^{\prime\,-1})$. This loss is summed over all training points, averaging over the prior from
which the simulations were drawn. Instead of locally flattening our
coordinates around a fiducial point, this corresponds to making them
approximately flat over all of space. As with estimating the Fisher information, we fit an ensemble of networks
to this coordinate transformation to compute uncertainties.
We begin by fitting a single network to the coordinate transformation using
$\bar{\mat{F}}$ as our Fisher estimate.
This network is then copied multiple times, and each copy is fine-tuned to
a different Fisher estimate, $\mat{F}^{(i)}$, from the earlier ensemble, to
obtain $\gen{\bm{\eta}}^{(i)}$.
After ensuring that each of these estimates is aligned (see
\cref{app:unique sol}), we compute the mean and uncertainty
\begin{equation}
    \bar{\gen{\eta}}^a = \sum_{i} w_i\,{\gen{\eta}^{(i)}}^a,
    \quad
    \delta \gen{\eta}^a =
    \sqrt{\sum_{i} w_i\!\left({\gen{\eta}^{(i)}}^a - \bar{\gen{\eta}}^a\right)^{\!2}},
\end{equation}
on each transformed coordinate, with the same weighting as before.
We provide several architecture options for the flattening operator, but
use an inverse-conditioned, fully-connected MLP by default (see
\cref{app:architecture}).

\paragraph{3.~Symbolic coordinates.}
For increased interpretability and ease of use, we wish to find a symbolic
approximation to the coordinate transformation learned by the network.
We apply symbolic regression \citep{Kronberger_2024} with \textsc{operon}
\citep{Burlacu_2020} separately for
each output coordinate (and thus assume
sufficiently independent errors for each component), using a Gaussian loss
function with the previously calculated mean and uncertainty. {Crucially, once trained, the neural map $\gen{\bm{\eta}}(\negv{\bm{\theta}})$ is decoupled from the original simulator: we therefore draw a fresh, dense set of parameters from the prior and pass them through the flattening ensemble to construct the augmented $(\negv{\bm{\theta}}, \bar{\gen{\bm{\eta}}}, \delta\gen{\bm{\eta}})$ training set used by SR. In our experiments, this augmentation was the most consistent way to improve
symbolic-regression convergence--e.g.\ on Rosenbrock we use ${\sim}2{,}000$ prior draws against ${\sim}250$ Fishnet validation points (\cref{app:experiments}).}  {We use the default \textsc{operon} configuration described in
\cref{app:architecture}: an offspring-generator with maximum equation length
$25$, depth $10$, $10$ optimiser iterations per candidate, the default operator set
\texttt{\{add, mul, pow, sqrt, square, exp, logabs\}}, and a per-component time
budget of $120$\,s.} Given a final Pareto front of the most accurate models at each complexity,
we select our final equation using a formulation of the
minimum-description-length (MDL) principle relevant to symbolic regression
\citep{Bartlett_2024_ESR}. This single objective penalises functions which
are poor fitting, highly complex, or have many or finely tuned parameters,
and is an approximation to the logarithm of the probability of the
function given the data, under a specific prior on the equation's form
\citep{Bartlett_2023_priors}.

For comparison, we also leverage a ``flatness'' metric in the form of
\cref{eq:flat_loss}. To select expressions component-wise we replace the
Jacobian column $\partial \eta_i/\partial \theta$ calculated from the
neural network with the gradient computed from the proposed symbolic
expression, and select those expressions which result in the minimum change
in loss with respect to the neural coordinates. This criterion does not
penalise expression length, but most often yields a flatter geometry.
All models undergo a pruning procedure to remove unnecessary terms, as
outlined in \cref{app:post_process}.

\section{Experiments}
\label{sec:experiments}
We apply our method to both synthetic and scientific applications. In both settings we demonstrate both the physical insights (learned coordinates) and efficiency gains degeneracy resolution brings to Bayesian inference problems, in particular neural posterior estimation.

\subsection{Synthetic validation}
\label{sec:synthetic_validation}

\begin{figure}
    \centering
    \includegraphics[width=\linewidth]{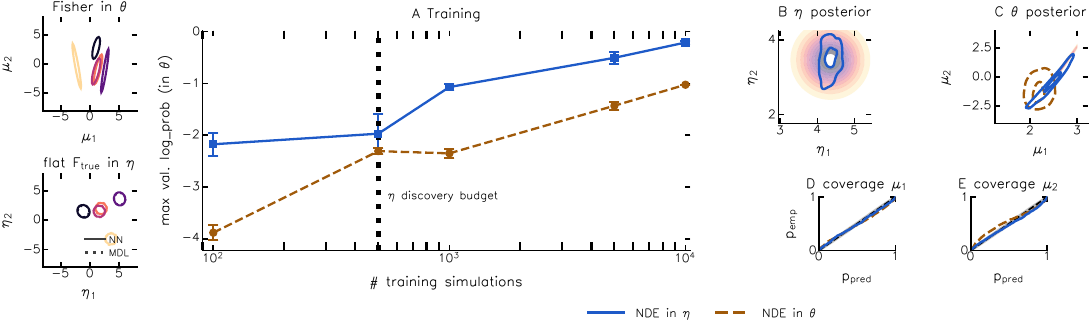}
    \caption{Rosenbrock validation case. \textit{Left}: Learned Fisher curvature (top) is flattened by learning $\gen{\eta}$ (bottom). \textit{Right}: Knowing the symbolic, flat geometry A) makes learning posteriors easier even with increasing simulation budget B) yields faithful posteriors in target $\negv{\theta}$ coordinates and C) is better calibrated than learning in degenerate $\negv{\theta}$ space. 
    }
    \label{fig:rosenbrock-main}
\end{figure}

\paragraph{Rosenbrock.}
We consider the two-dimensional Rosenbrock function: a notoriously
difficult, non-convex test problem whose information geometry is known and
can be analytically mapped to the identity using ``flat'' coordinates
(\cref{app:rosenbrock}). Using data generated as described in
\cref{app:rosenbrock}, we display the results of our three-step procedure
in \cref{fig:rosenbrock-main}, where our original coordinates are
$\bm{\theta} = (\mu_0, \mu_1)$ and the transformed coordinates are
${\bm{\eta}} = (\nu_0, \nu_1)$.

\textbf{Results.} The Fishnet ensemble produces an unbiased and well-calibrated estimate of
the true underlying information geometry of the Rosenbrock posterior,
illustrated via (a) Fisher Cholesky factors and (b) trace-normalised Fisher
matrix shape $M_{ij} = F_{ij}/\tr\mat{F}$ (Fig.~\ref{fig:rosen-sampling-dist}).
The flattening operation
produces ensemble-computed neural coordinates $\eta$ and residuals
$\delta\eta$ that are consistent with $\mathcal{N}(0,1)$, ideally suited
for the symbolic-regression MDL selection criterion. The symbolic regression
and post-processing step consistently recovers the true expressions,
$\eta = [b_0\mu_0^2 + b_1\mu_1,\; b_2\mu_1]$. Knowing amortised coordinates makes downstream inference in the Rosenbrock setting more simulation efficient. We train an ensemble of NPE networks at different simulation budgets on an exacerbated Rosenbrock geometry ($\Sigma={\rm diag}(4^2,1^2)$). Using the learned coordinates we reconstruct the true posteriors at all simulation budgets with 1) better calibration and 2) up to $10\times$ fewer simulations than NPE in raw $\theta$.

\paragraph{Gaussian with unknown variance.}
We now turn to an example which cannot be flattened exactly at all points
in parameter space.
We consider a one-dimensional Gaussian, parametrised by mean $\mu$ and
standard deviation $\sigma$.
Taking draws from this distribution, the Fisher matrix is
$F_{ab} = n_{\rm d} \diag(1/\sigma^2, 2/\sigma^2)$.
As discussed in \cref{app:gauss_normal_coords}, the Riemann tensor for a
metric with $g_{ab}=F_{ab}$ is non-zero in this case
(\cref{eq:gauss_riemann_tensor}), confirming that one cannot flatten the
Fisher matrix everywhere.
Hence, we wish to find a choice of parameters which makes the Fisher as
flat as possible, even if one cannot make it identical to the identity
everywhere.
Given a fiducial point, it is possible to enforce only quadratic deviations
from flatness through the use of geodesic normal coordinates. These are
constructed in \cref{app:gauss_normal_coords}, and are used as a benchmark:
when averaged over the prior, our method should construct coordinates with
a lower loss than these analytic ones, although close to the fiducial point
the analytic ones must perform better. We choose the fiducial point to be $(0.0, 1.0)$. We generate $10\,000$ simulations via $\mu \sim \mathcal{U}[-5,5]$ and
$\sigma \sim \mathcal{U}[0.2, 7.0]$, and generate
$x \sim \mathcal{N}(\mu, \sigma)$ with $\dim(x) = 50$. We adopt the same
network setup calibrated on the Rosenbrock example. 

\textbf{Results.}
We obtain (i) neural coordinates and (ii) two sets of symbolic coordinates
selected according to (a)~MDL and (b)~Frobenius criteria. We compare the
``flatness'' of our coordinates against \textit{true}, analytic Fisher
matrices as a function of geodesic distance $\beta$ (see
\cref{app:gauss_normal_coords}), computed over test data. For comparison,
we compute geodesic normal coordinates around the fiducial point
$\theta^\star = (\mu^\star, \sigma^\star) = (0, 1)$, as well as ``ad-hoc''
coordinates typically used for Gaussian parameter inference,
$\eta^{\rm ad\text{-}hoc} = (\mu/\sigma,\,\log\sigma)$.
\cref{fig:mu-sigma-heater-scaling} shows that geodesic coordinates
obtain perfect flattening close to the fiducial point ($\beta\approx 0$),
but fail exponentially quickly further away. Both neural and symbolic coordinates
consistently flatten the true geometry across the parameter space.

\begin{figure}[t]
  \centering
  \includegraphics[width=
  \linewidth]{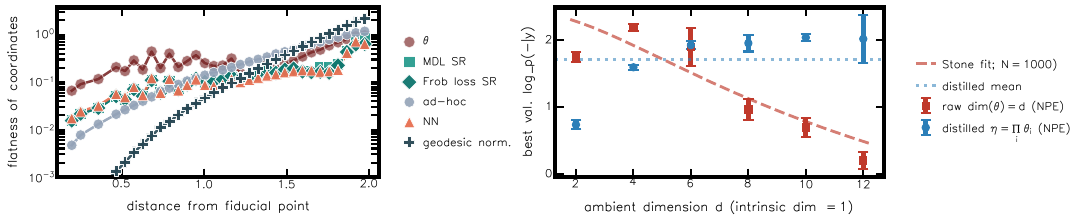}
  \caption{\textit{Left:}~Neural and symbolic regression components outperform
  (i)~hand-picked (ad-hoc) coordinates and (ii)~flatten geometry far from
  the fiducial point \cref{eq:beta_from_mu_sigma},
  without the need for an analytic Fisher matrix to derive geodesic normal
  coordinates. \textit{Right:}~Inference validation $\log p(\cdot | y)$ shows little degredation for NPE estimating distilled $\eta = \prod_i \theta_i \in \mathbb{R}$ for the
  chain-product heater simulator with
  intrinsic dim $= 1$, while raw NPE in $\theta \in \mathbb{R}^{d}$
  degrades with $d$ at fixed training budget $N$. }
  \label{fig:mu-sigma-heater-scaling}
\end{figure}

\subsection{Scientific applications}
\label{sec:applications}

\paragraph{Industrial Heater control.}
\label{sec:heater}
We construct a minimal synthetic experiment to demonstrate that the distillery can perform rank-deficient \emph{dimensionality reduction}, improving NPE scaling.  Here we explore a rank-deficient diagnostic case, not a global flattening case. A first-order resistive-heater
plant is driven by an applied voltage and current $\theta = (V, I)$, with observable 20-point time series of the temperature step response (see e.g. \cite{Incropera2011}),
\begin{equation}
  y(t) \;=\; (V \cdot I)\,(1 - e^{-t/\tau}) \;+\; \varepsilon(t),
  \qquad \varepsilon(t) \overset{\text{iid}}{\sim}
         \mathcal{N}(0, \sigma^{2}).
  \label{eq:heater_simulator}
\end{equation}
The plant dynamics depend on $(V, I)$ \textit{only} through the dissipated power
$P = V \cdot I$, so the Fisher information matrix
$F_\theta(\theta) = (\|k\|^{2}/\sigma^{2})\,(\theta_2, \theta_1)^{\top}
(\theta_2, \theta_1)$ is structurally rank-1 \emph{everywhere}: $\theta$
parametrises a 2-D box but the data manifold is exactly 1-D. This is a common occurrence in industrial control problems where two physical dials multiply together to produce a single output. 

\textbf{Results.} Running our method on $N_{\text{sim}} = 10^{3}$ simulations with a
uniform prior $\theta_i \sim \mathcal{U}[1, 2]$ recovers two affine
coordinates whose symbolic form is $\eta_{\text{nuis}} \;\approx\; 0.10\,(V - I) + 0.4$ and $\eta_{\text{iden}} \;\approx\; -0.6\,(V + I) + 6.0$. The pipeline therefore
\emph{automatically} identifies the 1-D submanifold of identifiable
parameters and the orthogonal nuisance direction from the data alone (see App. \cref{app:heater} for details). \textbf{Scaling Impact.} Density estimation becomes increasingly difficult in high dimensions, requiring $N(d, \varepsilon) \;\sim\; \varepsilon^{-(2\beta + d)/(2\beta)}$ simulations to reach an error tolerance $\varepsilon$ (Stone's relation; \cite{stone1980}). 
If the data manifold
has intrinsic dimension $d^{*} < d$, training on the distilled
coordinates costs only $N(d^{*}, \varepsilon)$, giving an
exponential-in-$(d-d^*)$ saving (see App \ref{app:heater}).

Explicit symbolic (or neural) distillation of the \textit{intrinsic} parameter combinations delivers this exponential gain in density estimation by effectively shrinking the parameter space. 

\textbf{Empirical scaling sweep.}
To make this explicit in the context of neural posterior estimation (NPE; \cite{papamakarios2021normalizing,pydelfiAlsing_2019}), we extend the heater problem to a chain-product simulator
\begin{equation}
  y(t) \;=\; \Big(\prod_{i=1}^{d} \theta_i\Big)\,
             (1 - e^{-t/\tau}) \;+\; \varepsilon(t),
  \qquad \theta_i \sim \mathcal{U}[1, 2],
\end{equation}
in which the intrinsic dimension is fixed at one for every $d$.
Figure~\ref{fig:mu-sigma-heater-scaling} shows the best validation
log-probability of two neural density estimators
trained on a fixed budget of $N_{\text{sim}} = 10^{3}$ simulations: the
raw flow with target $\theta \in \mathbb{R}^{d}$ and the distilled flow
with target $\eta = \prod_i \theta_i \in \mathbb{R}$. The raw flow's
inference quality degrades monotonically with $d$, tracing Stone's relation, while the distilled
flow is, by construction, insensitive to $d$. The gap widens at the rate
predicted by Eq.~\eqref{eq:distillation_gain}, despite the underlying
observed information being identical at every $d$.

\begin{figure}
    \centering
    \includegraphics[width=\linewidth]{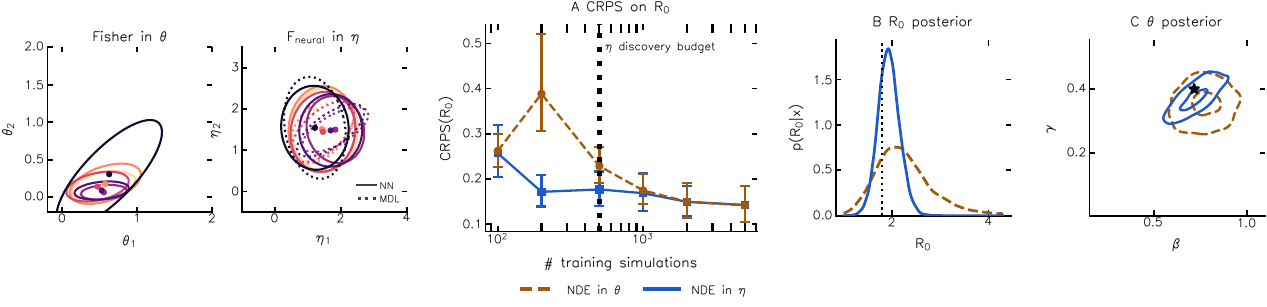}
    \caption{Epidemiology example. Distilled coordinates flatten learned curvature (\textit{left}). Amortised $\eta$ make downstream NPE more efficient in $R_0=\beta/\gamma$, measured by CRPS at fixed simulation budgets and higher noise levels (\textit{middle}). \textit{Right:} Posteriors sampled in $\eta$ capture $R_0$ with higher fidelity and trace the degeneracy direction in $(\beta,\gamma)$ (shown for the $N_{\rm sim}=500$ case).}
    \label{fig:sir_results}
\end{figure}

\paragraph{Epidemic infection rates (SIR).}
\label{sec:epidemiology}
The Susceptible--Infected--Recovered model is a standard model of epidemic dynamics
\citep{kermack1927contribution,hethcote2000mathematics}. Epidemic-outbreak
dynamics can be modelled using coupled ordinary differential equations
describing susceptible, infectious, and recovered populations, governed by
infection ($\beta$) and recovery ($\gamma$) rates. A disease becomes epidemic when its basic reproduction number
$R_0 = \beta/\gamma > 1$, corresponding to the case where one infected
person infects more than one other person on average. 
\textbf{Data.}
Epidemiologists usually only see noisy estimates of infection numbers over
time. Here we ask whether a nonlinear reparameterisation of
$(\beta, \gamma)$ can be learned from noisy simulations alone. We focus on the \emph{epidemic regime}, retaining simulations with
$R_0 \geq 1.15$ and discarding a narrow band around the epidemic threshold (details in \cref{app:sir-simulation-details}). Each simulation vector is a normalised infectious trajectory $I(t)/N$ on a uniform grid of $30$ time-points over $[0, 50]$ with additive Gaussian noise of standard deviation $0.05$. To distil equations, we use 500 simulations in $(\beta,\gamma)$. To demonstrate the usefulness of the learned coordinates, we extend our validation inference problem to include the initial number of infected individuals, $I_0$. The full parameter vector at inference is then $\theta = (\beta, \gamma, I_0/10)$. 

\textbf{Results.}
{We recover the symbolic coordinates
\[
        \eta_1 = \frac{0.64\beta}{1.41\gamma - 0.72} + 0.54,
        \quad
        \eta_2 = 0.55\beta + 0.67
\]
which closely track the analytically expected $R_0 \propto \beta/\gamma$
direction in the first component, flattening the learned Fisher matrices shown in 
\cref{fig:sir_results}. Downstream neural posterior estimation using amortised $\eta$ coordinates provides sharper $R_0$ posterior reconstruction (measured by CRPS) and
attains the same validation log-probability as the $\theta$-space baseline
using up to five times fewer simulations in low-simulation settings, since $\bm{\eta}$ more closely tracks parameter fluctuations in the data (details in \cref{app:sir-simulation-details}).}

\paragraph{Gravitational-wave chirp mass.}
Here we apply our method to gravitational-wave (GW) data, with the goal of
inferring the component masses $\theta = (m_1, m_2)$ of a compact binary
black-hole system from full waveform data. We compare two simulation regimes with increasing complexity. In both cases, we benchmark discovered coordinates against the conventional chirp-mass / mass-ratio combination
$\mathcal{M}_c = (m_1 m_2)^{3/5}/(m_1+m_2)^{1/5}$, $q = m_2/m_1$ used in GW astronomy. 

\textbf{Run 1.} We use TaylorF2 frequency-domain inspiral waveforms
\citep{Buonanno2009,Blanchet2014} at 2PN order (non-spinning), whitened by
the Advanced LIGO design sensitivity \citep{aLIGO2015} and PCA-compressed
to $40$ dimensions with additive unit Gaussian noise in the whitened
domain, at a luminosity distance of $200$\,Mpc.
The Fisher matrix varies non-trivially over the uniform prior in
$m_1 \geq m_2 \in [5, 50]\,M_\odot$. Heavy systems terminate at lower
$f_{\rm ISCO} \propto M^{-1}$ and accumulate fewer in-band cycles, so no
closed-form flat coordinates in $m_1,m_2$ exist. 

\textbf{Results.}
For masses rescaled to the $[0.1, 1.0]$ range, the distillery returns $\eta_1 = 0.2\,m_1,\
\eta_2 = 5.5\,\exp\!\left(-0.08\,(m_1+m_2)^2\right)$,
which is invertible in the prior studied. The second coordinate captures the inverse
of the chirp-mass-like sensitivity along the heavy-mass direction, and together the learned coordinates are a bijection of $(\mathcal{M}_c, q)$. 

{\textbf{Run 2. Beyond the chirp-mass approximation.}
TaylorF2 only captures the inspiral phase of the binary signal; for
moderately heavy systems the merger and ringdown carry most of the
in-band signal-to-noise and the leading-order
$\mathcal{M}_c \propto (m_1 m_2)^{3/5}/(m_1+m_2)^{1/5}$ scaling no
longer dominates the waveform. We repeat the experiment with the phenomenological inspiral-merger-ringdown
waveform \textsc{IMRPhenomD} \citep{Khan2016IMRPhenomD,Husa2016IMRPhenomD},
keeping the prior and PCA pipeline of the TaylorF2 case
unchanged. 

\textbf{Results.} The distillery now returns coordinates that do not contain the leading-order chirp-mass factor explicitly: the symbolic outputs are
\[
\eta_1 = 0.34\,(m_1 + m_2)^2 - 1.9, \quad
\eta_2 = -0.1\,m_1 - 0.3\,m_2 + 0.10\,(m_1+m_2)^2,
\]
i.e.\ a monotonic function of total mass $g(M)$ and the asymmetric combination
$\Delta m = m_1 - m_2$, with no chirp-mass-like factor anywhere. The discovered
coordinates are dramatically flatter than $(\mathcal{M}_c, q)$ for the
\textsc{IMRPhenomD} case, and the downstream posteriors are noticeably
rounder and better TARP-calibrated \citep{lemos2023samplingbasedaccuracytestingposterior} than the chirp-mass baseline
(\cref{fig:gw_imr}; details in \cref{app:gw-imr-details}).} This result indicates that the distillery has identified a more informative, data-driven combination of parameters.
\paragraph{Weak gravitational lensing.}
Weak gravitational lensing (WL) alters the trajectories of photons as they
pass through massive structures of visible and dark matter. The two-point
correlation function of this observable is sensitive to two degenerate
parameters, $(\Omega_{\rm m}, \sigma_8)$, which control the Universe's
matter content and dark-matter clustering, respectively. The nonlinear
parameter $S_8 = \sigma_8(\Omega_{\rm m}/0.3)^{0.5}$ is conventionally used
to summarise the
leading amplitude degeneracy to
changes in the two-point function
\citep{Jain1997,Bernardeau1997,Kilbinger2015}. We apply our method to 2-point statistics from mock WL convergence images, computed from expensive dark matter simulations (see App. \ref{app:WL}), varied over wide priors in $(\Omega_m, \sigma_8)$ and initial conditions. 

\textbf{Results.} We recover flattened coordinates $\eta_1 = \Omega_m\sigma_8 - 0.9\,\Omega_m^{0.144}$ and $\eta_2 = 0.5(\Omega_m - 1.0)$. These coordinates are similar to the standard $(\Omega_m, S_8)$ parameterization. In particular, $\eta_1$ captures the dominant weak-lensing amplitude degeneracy: near typical values $\Omega_m\simeq0.3$ and $\sigma_8\simeq0.8$, contours of constant $\eta_1$ have local slope $d\ln\sigma_8/d\ln\Omega_m \simeq -0.55$, close to the constant-$S_8$ slope of $-0.5$.

\begin{figure}
    \centering
    \includegraphics[width=\linewidth]{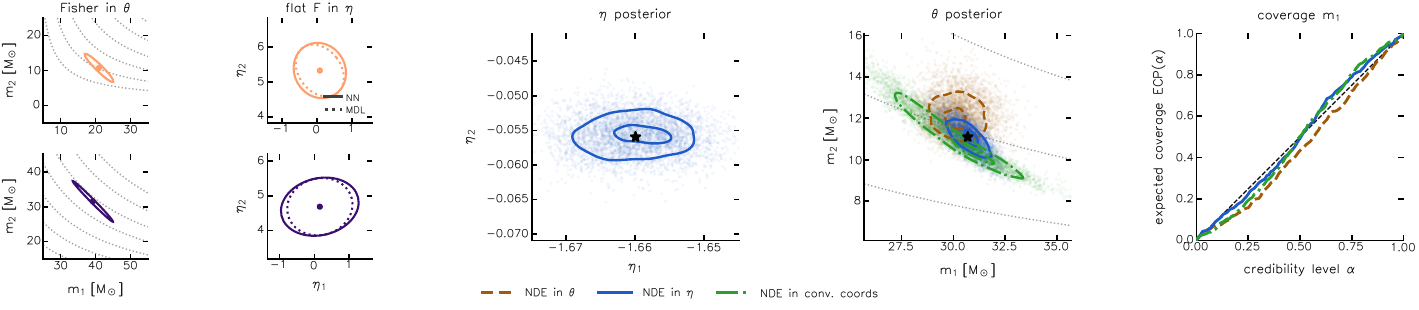}
    \caption{{Main GW result (\textsc{IMRPhenomD}). \textit{Left}: Example Fisher matrices in $(m_1, m_2)$ and in the learned $\eta$
    coordinates. \textit{Middle}: posterior contours from neural
    posterior estimators trained in $\theta = (m_1, m_2)$, in the
    $\eta$ basis, and in the conventional $(\mathcal{M}_c, q)$ basis. The
    $\eta$-trained posterior is round and centred on the truth, while
    both $\theta$-space and $(\mathcal{M}_c, q)$ posteriors stretch into
    bananas along the $\mathcal{M}_c$ direction (grey contours).
    \textit{Right}: TARP joint coverage; $\eta$ is closest to the diagonal across the full credibility range, indicating the learned coordinates are well-calibrated and more informative.}
    }
    \label{fig:gw_imr}
\end{figure}

\section{Discussion and Limitations}
\label{sec:discussion}
Across all experiments, the distillery captured information geometry and produced amortised, symbolic coordinates that make downstream inference more simulation-efficient and better calibrated in low- and noisy-data regimes, and provide physical insight about how observables are controlled by model parameters of interest. 
\paragraph{Limitations.}
{Our method has four main limitations. (i) Genuinely \emph{multimodal}
posteriors cannot be flattened by an invertible reparametrisation, since the
Fisher information is a local curvature object that is blind to global
modes. (ii) Very high-dimensional parameter spaces ($n_\theta \gtrsim 100$)
require large networks for stable Fisher estimation, and symbolic
regression scales poorly in the number of input variables. (iii) Exact
degeneracies (where $\mat{F}$ is singular) cannot be resolved by an
invertible map and require a prior reduction step \citep{Transtrum_2014}, although this can be mitigated in most cases with observational noise. 
(iv) The Fishnet stage is only as accurate as the
underlying Gaussian-posterior assumption (\cref{app:min_post_cov}); for
strongly non-Gaussian likelihoods, the recovered metric is the inverse of
the posterior covariance rather than the Fisher.} 


We introduced the \textit{degeneracy distillery}, a method for automatically
discovering and resolving parameter degeneracies directly from
parameter-data pairs. By estimating the Fisher information geometry with
neural networks, learning coordinate transformations that flatten this
geometry, and extracting symbolic representations of these transformations,
our approach produces interpretable reparametrisations that align with the
true directions of sensitivity in a model. This yields coordinates in which
the Fisher information is approximately isotropic, improving conditioning
for optimisation and sampling, while simultaneously exposing the
combinations of parameters that meaningfully control observables. Across synthetic and real-world examples, we demonstrated that the distillery
can recover known degeneracies as well as uncover previously unrecognised
structure, providing both computational and scientific benefits. In
particular, the resulting symbolic transformations offer a compact and
human-interpretable description of physical parameter dependencies. By focusing computation on
informative directions, the method can reduce the cost of
simulation-based inference and improve the robustness of downstream tasks. The degeneracy distillery provides a principled and practical tool for
uncovering and resolving degeneracies, offering both improved numerical
performance and deeper insight into the structure of parametrised models.
\section*{Broader Impact}
{This method is a tool for understanding and accelerating scientific
inference. The most direct impact is on simulation-based inference for
problems where simulation cost is the dominant bottleneck. By focusing learning from simulations along informative directions, the
distillery can reduce the energy and compute cost of large inference
campaigns and uncertainty quantification. The amortised, symbolic outputs additionally mitigates the opacity of black-box neural simulation-based inference.
We do not foresee specific negative societal impacts beyond those generic to
machine-learning tools deployed in scientific domains, although the
epidemiology application could be misapplied if used for policy decisions
without expert calibration.}

\section*{Data Availability}
The data underlying this paper will be shared on reasonable request to
the corresponding author. An open-source version of the code is available at \url{https://github.com/tlmakinen/degeneracy_distillery}.

\section*{Acknowledgements}
TLM acknowledges support from the InfoSys Cambridge AI Lab and Imperial
College London's President's Scholarship, as well as fruitful discussions with Will Handley, Alan Heavens, Natalia Porqueres, Roberto Trotta, Laurence Perreault-Levasseur, and François Lanusse.
DJB was supported by the Simons Collaboration on ``Learning the Universe'',
and acknowledges that support was provided by Schmidt Sciences, LLC. NJ is supported by the ERC-selected UKRI Frontier Research Grant EP/Y03015X/1.

\bibliographystyle{plainnat}
\bibliography{main}

\appendix

{The appendices are organised as follows.
\cref{app:min_post_cov} proves that the Fishnet objective recovers the
posterior mean and covariance, and motivates its use as a Fisher estimator.
\cref{app:unique sol} describes the alignment scheme used to fix the
constant-offset and orthogonal symmetries of the flattening loss when
ensembling.
\cref{app:architecture} details the Fishnet, flattener, and symbolic-
regression architectures and training hyperparameters.
\cref{app:gauss_normal_coords} constructs geodesic normal coordinates for
the Gaussian validation example used in \cref{sec:synthetic_validation}.
\cref{app:experiments} collects per-experiment simulation, prior, and
downstream-inference details.}

\section{Forward-KL Minimisation Yields the Posterior Moments}
\label{app:min_post_cov}

This appendix establishes that the Fishnet objective \eqref{eq:lossfn},
when treated as a variational loss over a Gaussian family, is minimised by
the posterior mean and posterior covariance of $\theta$ given $X$. The
identification with the Fisher information then follows in the standard
Laplace / Bernstein--von~Mises regime.

\paragraph{Setup.}
Let $X$ be observed data and $\theta\in\mathbb{R}^d$ parameters, with joint
density $p(X,\theta)$ and posterior $p(\theta\mid X)$. Throughout we
assume the posterior has finite first and second conditional moments
\begin{equation}\label{eq:post_moments}
m(X) := \mathbb{E}_{p(\theta\mid X)}[\theta], \qquad
S(X) := \mathrm{Cov}_{p(\theta\mid X)}[\theta]\in\mathbb{S}_{++}^{d},
\end{equation}
where $\mathbb{S}_{++}^{d}$ denotes the cone of $d\times d$ symmetric
positive-definite matrices. The Fishnet model parametrises a Gaussian
variational family
\begin{equation}\label{eq:gauss_family}
q_\varphi(\theta\mid X) = \mathcal{N}\!\bigl(\theta;\,\mu_\varphi(X),\,\Sigma_\varphi(X)\bigr),
\quad \Sigma_\varphi(X)\in\mathbb{S}_{++}^{d},
\end{equation}
through the network outputs $\mu_\varphi(X) := \hat{\theta}(X)$ and
$\Sigma_\varphi(X) := \mathbf{F}_\varphi(X)^{-1}$, where
$\mathbf{F}_\varphi$ is the log-Cholesky-parametrised matrix returned by
the network. Writing $\mathcal{L}$ for the loss \eqref{eq:lossfn}, the
per-sample loss is
\begin{equation}\label{eq:fishnet_loss_local}
\ell(\varphi; X, \theta) =
\tfrac{1}{2}\bigl(\theta - \mu_\varphi(X)\bigr)^{\!\top}\mathbf{F}_\varphi(X)\bigl(\theta - \mu_\varphi(X)\bigr)
- \tfrac{1}{2}\log\det\mathbf{F}_\varphi(X),
\end{equation}
and the population loss is $\mathbb{E}_{p(X,\theta)}[\ell(\varphi; X, \theta)]$.

\subsection{Main result}

\begin{proposition}[Forward-KL Gaussian moment matching]
\label{prop:moment-matching}
Fix $X$. Within the Gaussian family \eqref{eq:gauss_family}, the conditional
forward KL divergence
$D_{\mathrm{KL}}\!\bigl(p(\theta\mid X)\,\big\|\,q_\varphi(\theta\mid X)\bigr)$
admits a unique minimum over
$(\mu_\varphi,\,\mathbf{F}_\varphi)\in\mathbb{R}^{d}\times\mathbb{S}_{++}^{d}$
at
\begin{equation}\label{eq:moment_matching}
\mu_\varphi^{\star}(X) = m(X), \qquad
\Sigma_\varphi^{\star}(X) = \mathbf{F}_\varphi^{\star}(X)^{-1} = S(X),
\end{equation}
the posterior mean and posterior covariance of $\theta$ given $X$.
\end{proposition}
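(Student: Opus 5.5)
We write the forward KL as cross-entropy minus the posterior entropy,
\[
D_{\mathrm{KL}}\bigl(p(\theta\mid X)\,\big\|\,q_\varphi(\theta\mid X)\bigr)
= -\mathbb{E}_{p(\theta\mid X)}\bigl[\log q_\varphi(\theta\mid X)\bigr] - H\bigl[p(\theta\mid X)\bigr].
\]
The entropy term does not depend on $(\mu_\varphi,\mathbf{F}_\varphi)$, so it suffices to minimise the Gaussian cross-entropy. Substituting the Gaussian density with precision $\mathbf{F}=\Sigma^{-1}$, this cross-entropy equals, up to the constant $\tfrac{d}{2}\log 2\pi$, the conditional expectation of the per-sample loss \eqref{eq:fishnet_loss_local}:
\[
\mathcal{C}(\mu,\mathbf{F}) = \tfrac12\,\mathbb{E}_{p(\theta\mid X)}\bigl[(\theta-\mu)^{\top}\mathbf{F}(\theta-\mu)\bigr] - \tfrac12\log\det\mathbf{F}.
\]
This identity also links the proposition to the Fishnet objective. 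Averaging over $p(X)$ recovers the population loss, and since $\mu_\varphi$ and $\mathbf{F}_\varphi$ are arbitrary functions of $X$, the population loss is minimised by minimising $\mathcal{C}$ pointwise in $X$. That remark comes after the main argument.

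\textbf{Step 1: bias--variance decomposition.} Add and subtract $m(X)$ inside the quadratic form. The cross term vanishes because $\mathbb{E}[\theta-m]=0$, and the finite second moments in \eqref{eq:post_moments} make every expectation finite. Using $\mathbb{E}[(\theta-m)^\top\mathbf{F}(\theta-m)]=\tr(\mathbf{F}S)$, we get
\[
\mathcal{C}(\mu,\mathbf{F}) = \tfrac12\,(m-\mu)^{\top}\mathbf{F}(m-\mu) + \tfrac12\tr(\mathbf{F}S) - \tfrac12\log\det\mathbf{F}.
\]
For every $\mathbf{F}\in\mathbb{S}_{++}^d$ the first term is nonnegative and vanishes only at $\mu=m$. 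Hence any minimiser must have $\mu^\star=m(X)$, whatever $\mathbf{F}$ is.

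\textbf{Step 2: minimise over $\mathbf{F}$.} It remains to minimise $g(\mathbf{F})=\tr(\mathbf{F}S)-\log\det\mathbf{F}$ over $\mathbb{S}_{++}^d$. This is the step needing care, because we must show the minimiser is unique and attained, not just stationary. Since $S\succ0$, substitute $A=S^{1/2}\mathbf{F}S^{1/2}\in\mathbb{S}_{++}^d$, a bijection of the cone. Then
\[
g = \tr A - \log\det A + \log\det S = \sum_i\bigl(\lambda_i - \log\lambda_i\bigr) + \log\det S,
\]
where $\lambda_i>0$ are the eigenvalues of $A$. The scalar function $x-\log x$ is strictly convex on $(0,\infty)$ with unique minimum at $x=1$. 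So $g$ is minimised exactly when all $\lambda_i=1$, i.e.\ $A=\mathbf{1}$, which gives $\mathbf{F}^\star=S^{-1}$. Equivalently, $\nabla g = S-\mathbf{F}^{-1}=0$, and $-\log\det$ is strictly convex on the cone.

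\textbf{Step 3: uniqueness of the joint minimiser.} Combining the two steps, $\mathcal{C}(\mu,\mathbf{F})\ge \tfrac12 g(\mathbf{F})\ge \tfrac12 g(S^{-1})$. Equality holds throughout only if $\mathbf{F}=S^{-1}$ and, because that $\mathbf{F}$ is positive definite, $\mu=m$. This gives \eqref{eq:moment_matching}, $\Sigma^\star=S(X)$, and the minimum is unique.
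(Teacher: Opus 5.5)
Your proof is correct and follows the paper's route: the same identity relating the conditional expectation of the per-sample loss to the forward KL (up to the $\varphi$-independent entropy and $\tfrac{d}{2}\log 2\pi$ terms), the same bias--variance split that forces $\mu^\star = m(X)$ for any $\mathbf{F}\succ 0$, and then minimisation of the precision objective. The one difference is in Step 2. You whiten with $A = S^{1/2}\mathbf{F}S^{1/2}$ and reduce to the scalar inequality $x - \log x \ge 1$, which gives attainment and uniqueness directly. The paper instead invokes strict convexity of $-\log\det$ on $\mathbb{S}_{++}^{d}$ and solves $\nabla_{\mathbf{F}} g = \tfrac12(S - \mathbf{F}^{-1}) = 0$.
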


\begin{proof}
Direct expansion of the Gaussian density in \eqref{eq:gauss_family} gives,
for any $(\mu_\varphi,\mathbf{F}_\varphi)$,
\begin{equation}\label{eq:loss_KL_identity}
\mathbb{E}_{p(\theta\mid X)}\bigl[\ell(\varphi; X, \theta)\bigr]
= D_{\mathrm{KL}}\!\bigl(p(\theta\mid X)\,\big\|\,q_\varphi(\theta\mid X)\bigr)
+ H\!\bigl(p(\theta\mid X)\bigr) - \tfrac{d}{2}\log(2\pi),
\end{equation}
where $H(\cdot)$ denotes differential entropy. Both correction terms are
independent of $\varphi$, so minimising \eqref{eq:fishnet_loss_local} in
expectation and minimising the conditional KL are equivalent.

Decompose $\theta - \mu_\varphi = (\theta - m) + (m - \mu_\varphi)$ in the
quadratic form. The cross-term vanishes because $\mathbb{E}_p[\theta - m]=0$,
and the trace identity
$\mathbb{E}_p\!\left[(\theta - m)^{\!\top}\!A(\theta - m)\right]=\mathrm{tr}(AS)$,
valid for any symmetric $A$, reduces \eqref{eq:fishnet_loss_local} to
\begin{equation}\label{eq:reduced_loss}
\mathbb{E}_p\bigl[\ell(\varphi; X, \theta)\bigr]
= \underbrace{\tfrac{1}{2}(m - \mu_\varphi)^{\!\top}\mathbf{F}_\varphi (m - \mu_\varphi)}_{\text{(I)}\;\geq 0}
+ \underbrace{\tfrac{1}{2}\mathrm{tr}(\mathbf{F}_\varphi S) - \tfrac{1}{2}\log\det\mathbf{F}_\varphi}_{\text{(II)}}.
\end{equation}
Term~(II) does not depend on $\mu_\varphi$, and term~(I) is a non-negative
quadratic form that vanishes if and only if $\mu_\varphi = m$ (since
$\mathbf{F}_\varphi\succ 0$). For any $\mathbf{F}_\varphi\in\mathbb{S}_{++}^{d}$,
replacing $\mu_\varphi$ by $m$ therefore strictly decreases
\eqref{eq:reduced_loss} unless $\mu_\varphi = m$ already; any minimiser
must satisfy $\mu_\varphi^{\star}=m$.

Substituting $\mu_\varphi = m$ leaves the precision objective
\begin{equation}\label{eq:precision_obj}
g(\mathbf{F}) := \tfrac{1}{2}\mathrm{tr}(\mathbf{F} S) - \tfrac{1}{2}\log\det\mathbf{F},
\qquad \mathbf{F}\in\mathbb{S}_{++}^{d}.
\end{equation}
The map $\mathbf{F}\mapsto -\log\det\mathbf{F}$ is strictly convex on
$\mathbb{S}_{++}^{d}$ and $\mathbf{F}\mapsto\mathrm{tr}(\mathbf{F} S)$ is
linear, so $g$ is strictly convex on the (convex) cone $\mathbb{S}_{++}^{d}$.
Standard matrix calculus gives
$\nabla_{\mathbf{F}}\,g(\mathbf{F}) = \tfrac{1}{2}\bigl(S - \mathbf{F}^{-1}\bigr)$,
which vanishes uniquely at $\mathbf{F}^{\star}=S^{-1}$, hence
$\Sigma_\varphi^{\star}=S$. The pair
$(\mu_\varphi^{\star},\mathbf{F}_\varphi^{\star}) = (m, S^{-1})$ is the
unique global minimiser of \eqref{eq:reduced_loss}, and by
\eqref{eq:loss_KL_identity} also of the conditional KL.
\end{proof}

\begin{corollary}[Fishnets recover the posterior moments]
\label{cor:fishnets-recover}
Suppose the variational class
$\{(\mu_\varphi,\,\mathbf{F}_\varphi):\varphi\in\Phi\}$ is sufficiently
expressive to represent the conditional moment maps $X\mapsto m(X)$ and
$X\mapsto S(X)^{-1}$, and that the training distribution covers $p(X)$.
Then any global minimiser of the population Fishnet loss
$\mathbb{E}_{p(X,\theta)}[\ell(\varphi; X, \theta)]$ satisfies, for
$p(X)$-almost every $X$,
\begin{equation}\label{eq:fishnet_recovers_moments}
\hat{\theta}(X) = \mathbb{E}_{p(\theta\mid X)}[\theta], \qquad
\mathbf{F}_\varphi(X) = \mathrm{Cov}_{p(\theta\mid X)}[\theta]^{-1}.
\end{equation}
\end{corollary}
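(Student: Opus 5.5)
The plan is to reduce the population problem to the pointwise problem already solved in \cref{prop:moment-matching}, using the tower property. First, I will write the population loss by conditioning on $X$:
\begin{equation*}
\mathbb{E}_{p(X,\theta)}[\ell(\varphi;X,\theta)] = \mathbb{E}_{p(X)}\Bigl[\mathbb{E}_{p(\theta\mid X)}[\ell(\varphi;X,\theta)]\Bigr].
\end{equation*}
By \eqref{eq:reduced_loss}, the inner expectation equals $\text{(I)}+g(\mathbf{F}_\varphi(X))$, with $g$ as in \eqref{eq:precision_obj} and $S=S(X)$. Equivalently, by \eqref{eq:loss_KL_identity}, it equals the conditional KL plus terms independent of $\varphi$. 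Those extra terms integrate to a finite constant, assuming the posterior entropy is integrable, so the population loss is $\mathbb{E}_{p(X)}[D_{\mathrm{KL}}(p(\theta\mid X)\,\|\,q_\varphi(\theta\mid X))] + C$. If that constant is not available, I will work with (I)$+$(II) directly, which avoids entropies entirely.

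Second, I will produce a pointwise lower bound and show it is attained. For each $X$, \cref{prop:moment-matching} shows the inner term is at least $g(S(X)^{-1}) = \tfrac{d}{2} + \tfrac12\log\det S(X)$. Equality holds if and only if $\mu_\varphi(X)=m(X)$ and $\mathbf{F}_\varphi(X)=S(X)^{-1}$. Taking expectations gives a lower bound $\mathcal{L}^\star$ on the population loss, valid for every $\varphi$. By the expressivity assumption there is some $\varphi_0$ with $\mu_{\varphi_0}=m$ and $\mathbf{F}_{\varphi_0}=S^{-1}$. This $\varphi_0$ attains $\mathcal{L}^\star$, so $\mathcal{L}^\star$ is the global minimum value.

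Third, I will show that every global minimiser matches the moments almost everywhere. Let $\varphi^\star$ be any global minimiser. Its population loss equals $\mathcal{L}^\star$, so the pointwise excess
\begin{equation*}
\Delta(X) := \mathbb{E}_{p(\theta\mid X)}[\ell(\varphi^\star;X,\theta)] - g(S(X)^{-1}) \ge 0
\end{equation*}
has zero $p(X)$-expectation. Hence $\Delta=0$ for $p(X)$-almost every $X$. The uniqueness clause of \cref{prop:moment-matching} then yields \eqref{eq:fishnet_recovers_moments} on that full-measure set. The coverage assumption is what makes "$p(X)$-almost every" the right quantifier: the loss only constrains the networks where $p(X)$ places mass.

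The main obstacle is the measure-theoretic bookkeeping rather than any new idea. The expectations must be well defined and finite, so that the subtraction defining $\Delta$ is legitimate. This requires $\mathbb{E}_{p(X)}|\log\det S(X)|<\infty$, and it requires the pointwise infimum to be measurable in $X$, which holds because it is an explicit function of the moments. I would handle this by stating a mild integrability assumption on $\log\det S(X)$. I would also note that if the population loss were $-\infty$ or $+\infty$, the statement would be vacuous or would need reinterpretation.
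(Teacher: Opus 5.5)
Your proposal is correct and takes the same route as the paper: the tower property reduces the population loss to the conditional problem, \cref{prop:moment-matching} gives the pointwise minimiser, and expressivity lets a single $\varphi$ attain it for $p(X)$-almost every $X$. Your nonnegative-excess step ($\Delta\ge 0$ with zero mean, hence $\Delta=0$ almost everywhere, then uniqueness) and the integrability condition on $\log\det S(X)$ spell out what the paper's terse proof leaves implicit, namely that \emph{every} global minimiser, not just the one you construct, must match the moments.
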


\begin{proof}
By the tower property,
$\mathbb{E}_{p(X,\theta)}[\ell] = \mathbb{E}_{p(X)}\!\bigl[\mathbb{E}_{p(\theta\mid X)}[\ell]\bigr]$.
Applying Proposition~\ref{prop:moment-matching} pointwise in $X$ and using
the identity \eqref{eq:loss_KL_identity}, the conditional minimum is
attained at \eqref{eq:moment_matching}; expressivity of the variational
class allows this minimum to be realised simultaneously for $p(X)$-almost
every $X$.
\end{proof}

\begin{remark}[Posterior covariance versus Fisher information]
\label{rem:fisher-vs-cov}
Corollary~\ref{cor:fishnets-recover} identifies $\mathbf{F}_\varphi(X)$
with the inverse \emph{posterior} covariance, not the likelihood Fisher
information
\begin{equation*}
\mathcal{I}(\theta) := -\mathbb{E}_{p(X\mid\theta)}\bigl[\nabla_\theta^{\,2}\log p(X\mid\theta)\bigr].
\end{equation*}
Under standard regularity, the local Laplace approximation gives
$p(\theta\mid X) \approx \mathcal{N}\!\bigl(\hat\theta_{\rm MAP},\,H(\hat\theta_{\rm MAP})^{-1}\bigr)$
with posterior curvature
$H(\theta) = \mathcal{J}(\theta;X) - \nabla_\theta^{\,2}\log p(\theta)$,
where $\mathcal{J}(\theta;X) := -\nabla_\theta^{\,2}\log p(X\mid\theta)$ is
the observed information. In the Bernstein--von~Mises regime of $n$ i.i.d.\
observations, $H(\theta_0)\to n\,\mathcal{I}(\theta_0)$ in probability and
the prior contribution becomes negligible, so $\mathbf{F}_\varphi(X)$
asymptotically recovers $n\,\mathcal{I}(\theta_0)$. Outside this regime
$\mathbf{F}_\varphi$ should be read as an inverse posterior covariance,
which is the relevant object for the flattening pipeline of
\cref{sec:method}: the metric \eqref{eq:metric_transform} transforms
identically under $\theta\to\eta$ whether one substitutes
$\mathcal{I}(\theta)$ or $\mathrm{Cov}_{p(\theta\mid X)}[\theta]^{-1}$.
\end{remark}

\section{Obtaining a unique solution in flattened coordinates}
\label{app:unique sol}

The loss function we minimise (\cref{eq:lossfn}) has two important symmetries:
\begin{enumerate}
    \item $L(\gen{\bm{\eta}})$ is invariant if $\gen{\eta}^a \to \gen{\eta}^a + c^a$, for some constant vector $c^a$ (this is trivial from the invariance of $F_{ab}$ to this).
    \item $L(\gen{\bm{\eta}})$ is invariant under global orthogonal transformations. To prove this, consider new coordinates $\{\tilde{\phi}^a\}$ such that $\gen{\eta}^a = R^{a}{}_b \tilde{\phi}^b$ where $\mat{R}\mat{R}^{\rm T} = \mat{1}$. The new Fisher is
    \begin{equation}
        \tilde{\mat{F}}(\phi) = \mat{R}^{\rm T}\tilde{\mat{F}}(\theta)\mat{R}. 
    \end{equation}
    Using the cyclic property of the trace, one sees that $L(\tilde{\phi}) = L(\gen{\eta})$.
\end{enumerate}
Optimising the Frobenius norm loss does not, therefore, yield a unique set of parameters, but one that is known only up to a constant offset and an orthogonal transformation.
This is particularly problematic when we train an ensemble of networks, since each member of the ensemble will have a different offset and orthogonal transformation relative to some fiducial network. 

To deal with this, we employ a two-step rotation scheme to align ensemble members $\{ \gen{\bm{\eta}} \}$. First, we choose a reference ensemble member and employ a Householder reflection \citep{householder_unitary_1958} to rotate its learned coordinates at a fiducial value $\gen{\eta}^* = \gen{\eta}(\theta^*)$ such that $\gen{\eta}^*$ points along the first eigenvalue of the variance-weighted Fisher matrix:
\begin{equation}
    \mat{F}^* / \delta\mat{F} v = \lambda v, \quad
    R^H \gen{\eta}^* = \lambda_1.
\end{equation}

The Householder reflection is given as
\begin{equation}
    \text{ref}_{\mathbf{n}}(\mathbf{u}) = \mathbf{u} - 2\frac{\mathbf{n} \mathbf{n}^\top \mathbf{u}}{\mathbf{n}^\top \mathbf{n}}
\end{equation}
Where $\mathbf{n}$ is the normal to the reflection plane and
$\mathbf{n}^\top \mathbf{n}$ is the squared norm $\|\mathbf{n}\|^2$. This can be cast as two successive reflections from $ \hat{\mathbf{x}} = \frac{\mathbf{x}}{\|\mathbf{x}\|}$, $\hat{\mathbf{y}} = \frac{\mathbf{y}}{\|\mathbf{y}\|}$, one across the hyperplane defined by the sum of the two vectors, and one across the target vector:
\begin{align}
    S &= I - 2\frac{(\hat{\mathbf{y}} + \hat{\mathbf{x}})(\hat{\mathbf{y}} + \hat{\mathbf{x}})^\top}{(\hat{\mathbf{y}} + \hat{\mathbf{x}})^\top (\hat{\mathbf{y}} + \hat{\mathbf{x}})} \\
    R &= S - 2\frac{\hat{\mathbf{y}} \hat{\mathbf{y}}^\top S}{\hat{\mathbf{y}}^\top \hat{\mathbf{y}}},
\end{align}
where $R$ satisfies
\begin{equation}
    R\hat{\mathbf{x}} = \hat{\mathbf{y}}.
\end{equation}

Once this reference point is established, we employ a Kabsch rotation \citep{kabsch_solution_1976} to re-align each ensemble member's coordinates. We wish to align each ensemble member's output centroid-subtracted coordinates $P = [\gen{\eta}^i_j - \langle \gen{\eta}^i \rangle]\in \mathbb{R}^{N \times n_p}$, where $N$ is the number of samples and $n_p$ is the number of parameters,
to the reference ensemble member's centroid-subtracted coordinates $P = [\gen{\eta}^{0}_j - \langle\gen{\eta}^0\rangle]\in \mathbb{R}^{N \times n_p}$. The Kabsch algorithm finds an optimal rotation $R^*$ which minimises
\begin{equation}
    \mathcal{L}_{\rm kabsch} = \frac{1}{2}\sum_{i=0}^N || Q - RP||^2.
\end{equation}
The optimal rotation can be found from the Singular Value Decomposition (SVD) of the point cloud covariance matrix, $C = P^T Q$:
\begin{equation}
    C = U\Sigma V^T, \quad
    R^* = VBU^T,
\end{equation}
where $d = {\rm sign}(\det VU^T)$ and $B = {\rm diag}(1, 1, d)$ fixes rotations to be right-handed.

{
\paragraph{Jacobian Procrustes alignment.}
Coordinate Kabsch matches ensemble members in the
$\gen{\bm{\eta}}$-space, but the loss \eqref{eq:flat_loss} cares only about
the Jacobian $\mat{\mathcal{J}} = \partial \gen{\bm{\eta}}/\partial \bm{\theta}$
through the metric pullback. As an option to users we also provide another alignment scheme to highlight nonlinear flattened components along directions of high variance in the network Jacobian. We perform an orthogonal Procrustes fit performed directly on the batch of Jacobians:
given source Jacobians $\{\mat{{J}}^{(i)}_b\}_{b=1}^B$ for ensemble
member $i$ and reference Jacobians $\{\mat{{J}}^{(0)}_b\}_{b=1}^B$,
the rotation acts on the $\gen{\bm{\eta}}$-axis (left index) of
$\mat{{J}}$, $\mat{{J}}'_b = \mat{R}\,\mat{{J}}^{(i)}_b$,
and the alignment problem
\begin{equation}\label{eq:jacobian_procrustes}
    \mat{R}^* \;=\; \arg\min_{\mat{R}\in O(n_p)}
    \sum_{b=1}^{B} w_b\,\big\|\mat{R}\,\mat{{J}}^{(i)}_b
    - \mat{{J}}^{(0)}_b\big\|_{\mathcal{F}}^{2}
\end{equation}
admits the closed-form solution $\mat{R}^* = \mat{U}\mat{V}^{\rm T}$, where
$\mat{U}\mat{S}\mat{V}^{\rm T}$ is the SVD of the cross-covariance
$\mat{M} = \sum_b w_b\,\mat{{J}}^{(0)}_b\bigl(\mat{{J}}^{(i)}_b\bigr)^{\!\top}$.
The sample weights $w_b$ optionally reflect the validation weighting from
\cref{eq:lossfn}. Compared with coordinate Kabsch, this scheme uses every
sample and unit-magnitude derivatives instead of noisy coordinates, so it is
markedly more stable on near-degenerate axes whose $\bm{\eta}$-displacements
are dominated by ensemble noise. We allow improper rotations
($\det \mat{R}^* = -1$) per member when this strictly reduces
\eqref{eq:jacobian_procrustes}: the flatness loss
$\big\|\mat{R}\bar{\mat{F}}^\prime\mat{R}^{\rm T} - \mat{1}\big\|_\mathcal{F}^2$
is invariant under arbitrary $\mat{R}\in O(n_p)$, so the only requirement is
that all members land in a common frame, which the Procrustes solution achieves.}

{
\paragraph{Member-independent canonical basis.}
After Procrustes alignment the ensemble shares a single orthogonal frame, but
the choice of frame is still arbitrary. We fix it once on the reference member
by composing two operations. First, a \emph{nonlinearity-separating rotation}
takes the SVD of the sample-centred Jacobian stack
$\Delta\mat{{J}}_b = \mat{{J}}^{(0)}_b - \bar{\mat{{J}}}^{(0)}$
(with optional column rescaling by the prior widths
$\Delta\theta_a = \theta_a^{\max}-\theta_a^{\min}$ to put all parameters on a
dimensionless footing): reshaping
$\Delta\mat{{J}}\in\mathbb{R}^{B\times n_p\times n_p}$ to
$\Omega\in\mathbb{R}^{n_p\times Bn_p}$ and writing
$\Omega = \mat{U}{\Sigma}\mat{V}^{\rm T}$, the rotation
$\mat{R}_{\rm nl} = \mat{U}^{\rm T}$ orders the $\gen{\bm{\eta}}$-axes by
decreasing nonlinearity ``energy'' (singular values of $\Omega$): axes whose
Jacobian rows are sample-independent (i.e.\ $\gen{\eta}_a$ is a linear function
of $\bm{\theta}$) collect into the trailing rows. Second, a sign and (optionally)
permutation fix-up $\mat{R}_{\rm canon}$ uses the eigenstructure of the mean
prior-normalised $\bm{\theta}$-Fisher
$\bar{F}_{ab}/(\Delta\theta_a\,\Delta\theta_b)$ to flip per-axis signs so that
$\langle (\mat{R}_{\rm nl}\,\mat{{J}})_a, v_a\rangle \geq 0$ for the
corresponding Fisher eigenvector $v_a$. Composing $\mat{R}_{\rm basis}=
\mat{R}_{\rm canon}\mat{R}_{\rm nl}$ once on the reference and applying
$\mat{R}_{\rm total}^{(i)} = \mat{R}_{\rm basis}\,\mat{R}^{*\,(i)}$ to every
member guarantees the same axis ordering and orientation across the ensemble,
which keeps the dispersion of $\gen{\bm{\eta}}^{(i)}$ tight on the
near-degenerate (least-nonlinear) axes.}

\section{Architecture details}
\label{app:architecture}

\subsection{Fisher Information Neural Networks}

We train an ensemble of neural networks (Fishnets) to predict Fisher information matrices $F(\theta)$ directly from simulation data $\textbf{x}$.
Each network comprises a residual MLP followed by a Fisher prediction layer.
The Fisher matrices are parameterised via log-Cholesky decomposition to ensure positive-definiteness:
\begin{equation}
F = LL^\top, \quad L_{ii} = \mathrm{softplus}(\ell_i) + 10^{-4},
\end{equation}
where $\ell_i$ are learned diagonal parameters and off-diagonal elements are unconstrained.

\subsubsection{SoftSquelch activation}

The Fisher prediction branch employs a learned sparse activation function:
\begin{equation}
\mathrm{SoftSquelch}(x) = x \cdot \sigma\bigl(s(|x| - t)\bigr),
\end{equation}
where $\sigma$ is the sigmoid function, and threshold $t$ and sharpness $s$ are hyperparameters.
This suppresses small activations whilst preserving large ones, promoting sparsity.

\subsubsection{Ensemble hyperparameter priors}
Ensemble diversity is achieved through randomised architectural choices, which contribute to pushing the sampling distribution of learned Fisher matrix entries and downstream flattened coordinates towards Gaussian. Hidden layer widths are sampled uniformly from a user-specified range (default $[10, 300)$ neurons), and activation functions are drawn from a discrete distribution:
\begin{align}
p(\mathrm{activation}) = 
\begin{cases}
3/23 & \text{ReLU} \\
4/23 & \text{Leaky ReLU} \\
3/23 & \text{Swish} \\
2/23 & \text{Mish} \\
3/23 & \text{smooth\_leaky} \\
8/23 & \text{GELU}
\end{cases}
\end{align}
where Mish is defined as $\mathrm{Mish}(x) = x \tanh(\mathrm{softplus}(x))$, and smooth\_leaky is defined as:
\begin{equation}
\mathrm{smooth\_leaky}(x) = \alpha x + (1 - \alpha) x \log\sigma(x), \quad \alpha = 0.2,
\end{equation}
with $\log\sigma(x) = -\log(1 + e^{-x})$ ensuring no discontinuities in derivatives. The SoftSquelch hyperparameters are sampled from independent Gaussians:
\begin{align}
s &\sim \mathcal{N}(5.0, 0.7^2), \\
t &\sim \mathcal{N}(1.0, 0.7^2),
\end{align}
where $s$ controls the sharpness of the transition and $t$ sets the suppression threshold.

\subsubsection{Training objective}

Networks minimise the Kullback--Leibler divergence between predicted and true parameter distributions:
\begin{equation}
\mathcal{L}_{\mathrm{KL}} = \frac{1}{2}\mathbb{E}\Bigl[\mathrm{tr}\bigl(F(\hat{\theta} - \theta)(\hat{\theta} - \theta)^\top\bigr) - \log\det F\Bigr],
\end{equation}
where $\hat{\theta}$ denotes the predicted parameter estimate.

\subsection{Geometric Flattening}

We learn a coordinate transformation $\eta = f(\theta; w)$ that renders the Fisher metric approximately Euclidean.
The network architecture is a residual MLP with inverse whitening:
\begin{equation}
\eta = W^{-1} \cdot \mathrm{MLP}(\theta),
\end{equation}
where $W^{-1} = F_{\mathrm{mean}}^{1/2}$ is a fixed inverse whitening matrix computed from the global mean Fisher.

\subsubsection{Flattening loss}

The transformation is learned by minimising deviations of the pullback metric from the identity:
\begin{equation}
\mathcal{L}_{\mathrm{flat}} = \Bigl\| J^{-\top}FJ^{-1} - I \Bigr\|_F + \Bigl\| (J^{-\top}FJ^{-1})^{-1} - I \Bigr\|_F,
\end{equation}
where $J = \partial\eta/\partial\theta$ is the Jacobian. To speed up progress on hard points, each $\ell$ is rescaled by an adaptive
factor
\[
  r(\mathcal{L}) \;=\; \frac{\lambda\,\mathcal{L}}{\ell + \mathrm{e}^{-\alpha\mathcal{L}}},
  \qquad
  \alpha = \alpha(\lambda,\varepsilon),
\]
and the training objective is the batch mean of $\log\bigl(r(\mathcal{L})\,\mathcal{L}\bigr)$.  Larger
$\lambda$ makes the map $r(\cdot)$ more aggressive about emphasizing
large-residual samples relative to easy ones.

\subsubsection{Whitening}

The inverse whitening layer implicitly transforms the training objective.
With $\eta = W^{-1}\eta_{\mathrm{raw}}$ and $J = W^{-1}J_{\mathrm{raw}}$, the loss becomes:
\begin{equation}
\mathcal{L}_{\mathrm{flat}} = \Bigl\| J_{\mathrm{raw}}^{-\top}(WFW)J_{\mathrm{raw}}^{-1} - I \Bigr\|_F,
\end{equation}
effectively training on pre-whitened Fishers without modifying the data.

\subsection{Symbolic Coordinate Discovery}

We apply symbolic regression (\textsc{operon}, \citep{Burlacu_2020}) to the flattened coordinates $\eta$ to discover interpretable expressions.
Each component $\eta_i(\theta)$ is fitted independently using genetic programming with objectives balancing accuracy ($R^2$) and complexity (equation length).
The Pareto frontier yields candidate expressions.

{\paragraph{SR input augmentation.} The trained flattening ensemble defines a deterministic map $\bar{\eta}(\theta)$ with attached uncertainty $\delta\eta(\theta)$ that is independent of the simulations originally used to fit the Fisher--flattener stack. Rather than feeding only the (small) Fishnet validation set into SR, we draw a fresh dense sample from the prior support and evaluate $(\bar{\eta}, \delta\eta)$ on those points; the resulting triples are used as the Gaussian-loss SR targets. We find this materially improves SR convergence in every experiment we have tried (cleaner Pareto fronts, more reliable recovery of the analytic coordinates on Rosenbrock) and is essentially free since it requires no additional simulator calls; see \cref{app:experiments} for the per-experiment sample counts.}

\subsubsection{Model selection}

We rank expressions by multiple criteria:
\begin{itemize}
\item \textbf{Minimum description length (MDL):} Bayesian model comparison penalising complexity \citep{Bartlett_2023_priors,Bartlett_2024_ESR}.
\item \textbf{Frobenius loss:} Flattening quality $\| J_{\mathrm{symb}}^{-\top}FJ_{\mathrm{symb}}^{-1} - I \|_F$ on held-out data.
\item \textbf{$R^2$ score:} Prediction accuracy on test set.
\end{itemize}
The symbolic Jacobian $J_{\mathrm{symb}}$ is computed via automatic differentiation with Sympy \citep{sympy}.
Final coordinates are selected to minimise either the MDL or the Frobenius loss whilst maintaining interpretability.

\subsection{Postprocessing Symbolic Expressions}
\label{app:post_process}

Let $\ell_{\mathrm{flat}}(\theta)=\frac{1}{B}\sum_{b=1}^{B}\bigl\|\widetilde{F}_b(\theta)-I\bigr\|_F$
be the mean (over samples $b$) Frobenius distance of the flattened Fisher
$\widetilde{F}_b$ from the identity, for linear coefficients $\theta$ in the
symbolic coordinates.  Fix $\ell_{\mathrm{ref}}=\ell_{\mathrm{flat}}(\theta_0)$
at the current parameterization.

\emph{Importance (importance sampling / local sensitivity):} for each scalar
coefficient $c$ with $c\neq 0$, use a one-sided finite difference step
$\varepsilon$ and set
\[
  w_c \;=\; \bigl|c\bigr|\;
  \left|\frac{\ell_{\mathrm{flat}}(\theta_0+\varepsilon e_c)-\ell_{\mathrm{ref}}}{\varepsilon}\right|,
\]
where $e_c$ perturbs only that coefficient.

\emph{Pruning:} sort coefficients by increasing $w_c$.  Greedily try setting
coefficients to zero (optionally in batches); accept a proposed sparsification
$\theta'$ only if the \emph{relative} flattening increase stays below a
tolerance $\tau$,
\begin{equation}
    \frac{\ell_{\mathrm{flat}}(\theta')-\ell_{\mathrm{ref}}}{\ell_{\mathrm{ref}}} \;<\; \tau .
\end{equation}

\subsection{Invertible network mapping: Real NVP}

We optionally parameterize the reparameterization $\eta(\theta)$ with a Real NVP normalizing flow~\cite{dinh2017density}, implemented in JAX/Flax as a stack of affine coupling layers~\cite{dinh2017density,papamakarios2021normalizing}. Each layer fixes half the coordinates (alternating even/odd dimensions across layers) and applies an elementwise affine map to the remaining coordinates,
\[
  y_i = x_i \odot \exp\bigl(s(x_{\mathrm{fix}})\bigr) + t(x_{\mathrm{fix}}),
\]
where $s$ and $t$ are MLPs of one hidden layer with width \texttt{hidden\_dims}; the $s$-head uses a final $\tanh$ for bounded logits, and scales use $\exp$ by default so Jacobians stay tractable. The forward pass accumulates $\log|\det J|$ over layers; the inverse applies the couplings in reverse order. For training, $\theta$ is min--max scaled to $[0,1]$, shifted by one (so inputs lie in $[1,2]$), passed through the flow, and optionally left-multiplied by a fixed inverse-whitening matrix $\mathbf{W}^{-1}$ on $\eta_{\mathrm{raw}}$ (\texttt{WhitenedRealNVP}); the inverse reverses these steps in order.

{
\subsection{Forward--backward MLP}
\label{app:fwd_bwd_mlp}
As a lightweight alternative to the RealNVP flow, we pair the forward MLP $\eta = f_\phi(\theta)$ with a separate \emph{reverse-path} MLP $\hat\theta = g_\psi(\eta)$ that approximates the inverse, and add a mean-square cycle-consistency penalty $\|\theta - g_\psi(f_\phi(\theta))\|^{2}$ to the flattening loss with weight $\lambda_{\rm inv}$, so that the total objective is $\mathcal{L}_{\rm flat} + \lambda_{\rm inv}\,\mathbb{E}_\theta\|\theta - g_\psi(f_\phi(\theta))\|^{2}$; this softly enforces invertibility on the prior support without constraining the architecture as a normalising flow.}

\section{Geodesic normal coordinates for Gaussian example}
\label{app:gauss_normal_coords}

\subsection{Motivation}

The aim of this paper is to construct coordinates which make the Fisher information as flat as possible.
In \cref{sec:synthetic_validation}, we considered the example of a Gaussian, parametrised by mean $\mu$ and standard deviation $\sigma$. 
We will write these as coordinates $y^a = (\mu, \sigma)$. The Fisher information will be our metric, which is $F_{ab} = \diag(1/\sigma^2, 2/\sigma^2)$,
where we have ignored the
multiplicative factor of $n_{\rm d}$ for simplicity.

In this appendix, we wish to find an analytic coordinate transformation to compare against the one obtained by our method. As we will show, there does not exist a coordinate transformation which can make the Fisher equal to the identity everywhere, and thus we construct geodesic normal coordinates to find the coordinates which make the metric appear as flat as possible in the vicinity of some point.

Suppose we were at some special point $p$ with coordinates $y_\star^a = (\mu_\star, \sigma_\star)$ and wanted to make the metric flat in the vicinity of that point. 
For simplicity, let us begin by rescaling $\mu$ and $\sigma$ by functions of $\sigma_\star$ to make the metric equal to the identity at $p$. We define
\begin{equation}
    u \equiv \frac{\mu}{\sigma_\star},
    \quad
    v \equiv \sqrt{2} \frac{\sigma}{\sigma_\star},
\end{equation}
so that the corresponding line element for a metric $g_{ab} = F_{ab}$ becomes
\begin{equation}
    {\rm d} s^2 
    = \left( \frac{\sigma_\star}{\sigma} \right)^2 \left( {\rm d}u^2 + {\rm d} v^2 \right)
    = \frac{2}{v^2} \left( {\rm d}u^2 + {\rm d} v^2 \right),
\end{equation}
i.e. the metric in these coordinates is
\begin{equation}
    g_{ab} = \frac{2}{v^2} \delta_{ab}.
\end{equation}
We will use $x^a = (u, v)$.
The coordinates of the point $p$ are then
\begin{equation}
    u_\star = \frac{\mu_\star}{\sigma_\star}, \quad
    v_\star = \sqrt{2}.
\end{equation}

Although the metric is Euclidean at $p$ in these coordinates, we would see that as we moved away from $p$, the leading order terms for the metric would be linear in our coordinates.
A better set of coordinates to use is geodesic normal coordinates, since these give quadratic corrections to the metric at leading order:
\begin{equation}
    \label{eq:geodesic_metric_expansion}
    \tilde{g}_{ab} = \delta_{ab} - \frac{1}{3} R_{a c b d} (x_\star) \left( x^c - x_\star^c \right)\left( x^d - x_\star^d \right) + \mathcal{O} \left( \left| x - x_\star \right|^3\right),
\end{equation}
where $R_{abcd}$ is the Riemann tensor.
The aim of this appendix is to construct these coordinates for a Gaussian. Of course, the flatness is only valid for some region of space. This does, however, represent a good benchmark to see whether our ML method can provide something better over all space than this construction.

\subsection{Series expansion for geodesic normal coordinates}

Before explicitly constructing geodesic normal coordinates, let us first obtain a series expansion for the metric in these coordinates, so that we have something to verify our result against.

Let us first construct the Christoffel symbols for this metric, given by
\begin{equation}
    \Gamma^a_{bc} = \frac{1}{2} g^{a d} \left( \partial_c g_{d b} + \partial_b g_{d c} - \partial_d g_{bc} \right).
\end{equation}
For our metric, these are
\begin{equation}
    \Gamma^0_{ab} = 
    \begin{pmatrix}
        0 & - \frac{1}{v}\\
        - \frac{1}{v} & 0
    \end{pmatrix},
    \quad
    \Gamma^1_{ab} = 
    \begin{pmatrix}
        \frac{1}{v} & 0 \\
        0 & - \frac{1}{v}
    \end{pmatrix}.
\end{equation}
Now we must compute the Riemann tensor
\begin{equation}
    R^{d}_{cab} = \partial_a \Gamma^d_{bc} - \partial_b \Gamma^d_{ac} + \Gamma^d_{ae}\Gamma^e_{bc} - \Gamma^d_{be}\Gamma^e_{ac}.
\end{equation}
Since we are in two dimensions, this has only one independent component, and is thus fully determined by the Ricci scalar, $R$. For our metric, we find that
\begin{equation}
    \label{eq:gauss_ricci_scalar}
    R = -1,
\end{equation}
which yields the non-zero components of the Riemann tensor
\begin{equation}
    \label{eq:gauss_riemann_tensor}
    R_{0110} = R_{1001} = - R_{1010} = - R_{0101} =  \frac{1}{v^2}.
\end{equation}
Since the Riemann tensor is non-zero, we cannot find global coordinates which flatten the metric.
Instead, we are now in a position to write down the metric expansion in geodesic normal coordinates using \cref{eq:geodesic_metric_expansion}.
\begin{equation}
    \label{eq:gaussian_metric_expansion}
        \tilde{g}_{ab} \approx
    \mat{1} + \frac{1}{3 v_\star^2}
    \begin{pmatrix}
        \left( v - v_\star \right)^2 & - \left( u - u_\star \right) \left( v - v_\star \right) \\
        - \left( u - u_\star \right) \left( v - v_\star \right) & \left( u - u_\star \right)^2
    \end{pmatrix}
    + \mathcal{O} \left( \left| x - x_\star \right|^3\right).
\end{equation}

\subsection{Geodesic Equations}

To construct geodesic normal coordinates, we first need to compute the geodesics on this manifold. Let us use $t$ as our affine parameter, and use a dot to denote a derivative with respect to $t$. The geodesic equation is defined to be
\begin{equation}
    \frac{{\rm d}^2 x^a}{{\rm d} t^2} + \Gamma^a_{bc} \frac{{\rm d} x^b}{{\rm d}t} \frac{{\rm d} x^c}{{\rm d}t} = 0,
\end{equation}
hence our equations are 
\begin{equation}
    \ddot{u} - \frac{2}{v} \dot{u} \dot{v} = 0, \qquad
    \ddot{v} + \frac{\dot{u}^2}{v} - \frac{\dot{v}^2}{v} = 0.
\end{equation}

The equation for $u$ is easy to integrate once. If we multiply by $1 / v^2$, we note that this can be written as
\begin{equation}
    \frac{{\rm d}}{{\rm d}t} \left( \frac{\dot{u}}{v^2} \right) = 0 
    \quad \implies \quad
    \dot{u} = A v^2,
\end{equation}
where $A$ is a constant. Similarly, we note that we can write the equation for $v$ as 
\begin{equation}
    v \frac{{\rm d}}{{\rm d}t} \left( \frac{\dot{v}}{v} \right) + \frac{\dot{u}^2}{v}  = 
    v \frac{{\rm d}^2 \log v}{{\rm d}t^2} +\frac{\dot{u}^2}{v} = 0.
\end{equation}
We can then substitute in our result for $\dot{u}$ to obtain an equation for $v$ only. This becomes easier to solve if we define $w = \log v$, since this is
\begin{equation}
    \ddot{w} + A^2 e^{2 w} = 0,
\end{equation}
which has the solution
\begin{equation}
    \dot{w} = \pm \left( B - A^2 e^{2w} \right)^{\frac{1}{2}},
\end{equation}
where $B$ is a constant. This can then be integrated to obtain
\begin{equation}
    \pm \left( t - t_0 \right) = 
    \frac{-1}{\sqrt{B}} \tanh^{-1} \left( \sqrt{1 - \frac{A^2}{B} e^{2w}} \right),
\end{equation}
where $t_0$ is an integration constant.
Rearranging (squaring will remove the $\pm$), one finds
\begin{equation}
    v(t) = 
    \frac{\sqrt{B}}{|A|} \sech \left( \sqrt{B} \left( t - t_0 \right) \right),
\end{equation}
where we take the positive square root, since we will always have $v > 0$ for a Gaussian.
This can then be used to obtain an equation for $\dot{u}$
\begin{equation}
    \dot{u} = A v^2 = \frac{B}{A} \sech^2 \left( \sqrt{B} \left( t - t_0 \right) \right),
\end{equation}
which gives
\begin{equation}
    u(t) = \frac{\sqrt{B}}{A} \tanh \left( \sqrt{B} \left( t - t_0 \right) \right)
 + C,
\end{equation}
where $C$ is again a constant.

We will redefine our constants such that $\beta = \sqrt{B}$, and we will assume that the geodesic goes through the point $(u_\star, v_\star)$ at $t=0$. This gives us
\begin{align}
    u (t) &= u_\star + v_\star \cosh \left( \beta t_0 \right) \left[ \tanh \left( \beta \left( t - t_0 \right) \right) + \tanh(\beta t_0) \right], \\
    v (t) &= v_\star \cosh(\beta t_0) \sech \left( \beta \left( t - t_0 \right) \right) .
\end{align}
It is useful to rewrite this with the identity $\tanh x + \tanh y = \sech x \sech y \sinh (x + y)$, to give
\begin{align}
    u (t) &= u_\star + v_\star \sinh \left( \beta t_0 \right) \sech \left( \beta \left( t - t_0 \right) \right), \\
    v (t) &= v_\star \cosh(\beta t_0) \sech \left( \beta \left( t - t_0 \right) \right) .
\end{align}

\subsection{Geodesic Normal Coordinates}

Now we have found the geodesics which pass through the point $p$, we are in a position to construct geodesic normal coordinates about this point. 
These are constructed as follows. Suppose one has a vector $V^a$ which lives in the tangent space at $p$, then one can parallel transport this to the point $X$, which one defines as the point on the geodesic with affine parameter $t=1$ (the exponential map). In this case (provided the metric is Euclidean at $p$) the geodesic normal coordinates of $X$ are $V^a$.

Hence, to construct the geodesic normal coordinates, we need to know the tangent vector of our geodesic at the point $p$. If we define $V^a = (\xi, \eta)$, then these are
\begin{equation}
    \xi \equiv \dot{u}(0) = \beta v_\star \sech \left( \beta t_0 \right),
    \quad
    \eta \equiv \dot{v}(0) = \beta v_\star \tanh \left( \beta t_0 \right).
\end{equation}
We can therefore relate the remaining two free constants ($\beta$ and $t_0$) in our geodesic equation to the geodesic normal coordinates $\xi$ and $\eta$. 

It is useful to use the identity $\cosh(x - y) = \cosh x \cosh y - \sinh x \sinh y$ to split up the $\sech(\beta(t-t_0))$ term into terms which are functions of only $\beta t$ or $\beta t_0$. Doing this, and recalling the definition that $(u, v)$ is at the point $t=1$, we obtain the following expressions to relate $(u, v)$ to $(\xi, \eta)$
\begin{equation}
    u = u_\star + v_\star \frac{\xi \sinh \beta}{\beta v_\star \cosh \beta - \eta \sinh \beta}, \qquad
    v = v_\star \frac{\beta v_\star}{\beta v_\star \cosh \beta - \eta \sinh \beta},
\end{equation}
for
\begin{equation}
    \beta = \frac{1}{v_\star} \sqrt{\xi^2 + \eta^2}.
\end{equation}

This gives expressions for $u$ and $v$ in terms of the geodesic normal coordinates $\xi$ and $\eta$. One can see by Taylor expansion, that the coordinates $\xi=\eta=0$ indeed corresponds to the point $x^a = (u_\star, v_\star)$.
For small perturbations, we find that
\begin{equation}
    \label{eq:uv_approx}
    u - u_\star \approx \xi + \mathcal{O}({\rm quadratic}), \quad
    v - v_\star \approx \eta + \mathcal{O}({\rm quadratic}),
\end{equation}
which will be useful later.

\subsection{Inverting the coordinate transformation}

In the previous section we found $u$ and $v$ as a function of $\xi$ and $\eta$. In this section, we find the opposite: $\xi$ and $\eta$ as a function of $u$ and $v$.
We begin by taking $u - u_\star$ and dividing by our expression for $v$, one can obtain the relation
\begin{equation}
    \frac{\mu - \mu_\star}{\sigma} = \xi \frac{\sinh \beta}{\beta}.
\end{equation}
The left hand side of this expression is reminiscent of the $z$-score, or standardised Gaussian variable. For small $\beta$, $\sinh \beta / \beta \approx 1$, and thus $\xi$ is approximately equal to this standardised variable when one is close to the fiducial point.
The rest of the derivation requires us to consider separate cases, depending on the values of $\xi$ and $\beta$, which we outline below before summarising the result.

\paragraph{Case 1: $\xi \neq 0$, $\beta \neq 0$}

If we rearrange our expression for $v$ and assume $\xi \neq 0$, and substitute in this result to remove $\sinh \beta$, we obtain
\begin{equation}
    \beta v_\star \cosh \beta - \frac{\eta \beta}{\xi} \frac{\mu - \mu_\star}{\sigma} = \frac{\beta v_\star^2}{v} = \beta v_\star \frac{\sigma_\star}{\sigma}.
\end{equation}
Assuming $\beta \neq 0$, this can be rearranged to obtain
\begin{equation}
    \cosh \beta = \frac{\sigma_\star}{\sigma} + \frac{\mu - \mu_\star}{v_\star \sigma} \frac{\eta}{\xi}. 
\end{equation}
These can be combined using a hypergeometric identity to obtain
\begin{equation}
    \label{eq:xi_eta_from_mu_sigma}
    \frac{\eta}{\xi} = 
    \frac{\frac{1}{2} \left( \mu - \mu_\star \right)^2 + \sigma^2 - \sigma_\star^2}{\sigma_\star \left( \mu - \mu_\star \right) \sqrt{2}},
\end{equation}
where we have used that $v_\star = \sqrt{2}$.
We can now substitute this into our earlier expression for $\sinh \beta$
\begin{equation}
    \sinh \beta = 
    \frac{\beta}{\xi} \frac{\mu - \mu_\star}{\sigma}
    = \frac{\sgn(\xi)}{v_\star} \sqrt{1 + \frac{\eta^2}{\xi^2}} \frac{\mu - \mu_\star}{\sigma}.
\end{equation}
Since $\beta > 0$, we require the condition $\sgn (\xi) = \sgn (\mu - \mu_\star)$. But given that $\sgn(\sinh \beta / \beta) = 1$, from the earlier formula, we know that this must be true. We therefore find that we can obtain $\beta$ from
\begin{equation}
    \label{eq:beta_from_mu_sigma}
    \sinh \beta =
    \frac{\sqrt{
    \left( \frac{1}{2} \left( \mu - \mu_\star \right)^2 + (\sigma - \sigma_\star)^2 \right)
    \left( \frac{1}{2}  \left( \mu - \mu_\star \right)^2 + (\sigma + \sigma_\star)^2 \right)
    }}{2 \sigma_\star \sigma},
\end{equation}
and thus we now have a way of computing $(|\xi|, |\eta|)$ from $(\mu, \sigma)$.
We note that the signs of the coordinates can be determined using
\begin{equation}
    \label{eq:signs_from_mu_sigma}
    \sgn(\xi) = \sgn(\mu - \mu_\star), \quad
    \sgn(\eta) = \sgn \left( \frac{1}{2} \left( \mu - \mu_\star \right)^2 + \sigma^2 - \sigma_\star^2  \right).
\end{equation}

\paragraph{Case 2: $\xi = 0$, $\beta \neq 0$}

If $\xi=0$, then automatically we have that $\mu = \mu_\star$ and that $\beta v_\star = |\eta|$. We therefore obtain
\begin{equation}
    v = v_\star \frac{1}{\cosh \beta - \sgn(\eta) \sinh \beta} 
    = v_\star \exp \left( \sgn(\eta) \beta\right),
\end{equation}
and thus (given that $\beta \geq 0$)
\begin{equation}
    \beta = \left| \log \left( \frac{\sigma}{\sigma_\star} \right) \right|,
\end{equation}
where $\eta > 0$ for $\sigma > \sigma_\star$ and  $\eta < 0$ for $\sigma < \sigma_\star$.
Note that this is identical to \cref{eq:beta_from_mu_sigma} for $\mu = \mu_\star$.

\paragraph{Case 3: $\beta = 0$}

It looks like there are two remaining cases to consider: $\xi \neq 0$ with $\beta = 0$, and $\xi = 0$, $\beta = 0$, but from the definition of $\beta$, the first of these is impossible. We also note that the second of these means that $\xi=\eta=0$. We already noted that this corresponds to the point $\mu = \mu_\star$ and $\sigma = \sigma_\star$, so there is nothing else to add here.


\paragraph{Summary}

In summary, to obtain $(\mu, \sigma)$ from $(\xi, \eta)$:
\begin{enumerate}
    \item If $\mu = \mu_\star$ and $\sigma = \sigma_\star$, then $\xi = \eta = 0$.
    \item Otherwise, $\beta$ can be obtained from evaluating \cref{eq:beta_from_mu_sigma}. The signs of $\xi$ and $\eta$ are given by \cref{eq:signs_from_mu_sigma}. To obtain the magnitudes of $\xi$ and $\eta$, one can use the results:
        \begin{enumerate}
            \item If $\mu = \mu_\star$, then $|\eta| = v_\star \beta$ and $\xi = 0$.
            \item Otherwise, use \cref{eq:xi_eta_from_mu_sigma}
        \end{enumerate}
\end{enumerate}

\subsection{Metric and Jacobian for Geodesic Normal Coordinates}

Now we have functions for $u$ and $v$ as a function of $\xi$ and $\eta$, let us find the metric in these new coordinates.
We first find the Jacobian matrix
although we do not write the explicit components here, as they are not particularly illuminating.
One important property, however, is its determinant, which is
\begin{equation}
    \det J 
    = \frac{v_\star^2 \beta \sinh \beta}{\left( \beta v_\star \cosh \beta - \eta \sinh \beta\right)^2}
    = \frac{\mu - \mu_\star}{\sigma_\star} \frac{\sigma}{\sigma_\star} \frac{1}{\xi}.
\end{equation}

We see here that, unless $\mu = \mu_\star$, $\xi = 0$ or $\xi \to \infty$, we will definitely have a finite and non-zero determinant for finite values of $\mu$ and $\sigma$ (provided $\sigma \neq 0$ and $\sigma_\star \neq 0$, but we always have this case). Let us investigate the case $\mu = \mu_\star$ to see if this is problematic. This case corresponds to $\xi = 0$ (so our first two problematic cases are identical), and thus we must consider the limit of $(u - u_\star) / \xi$ as $\xi \to 0$.
In this limit $\beta = \eta / v_\star$, and hence
\begin{equation}
    \lim_{\xi \to 0} \frac{u - u_\star}{\xi} 
    = 
    \frac{v_\star \sinh \left( \eta / v_\star \right)}{\eta \cosh  \left( \eta / v_\star \right) - \eta  \sinh \left( \eta / v_\star \right)}
    = \left[ \frac{\eta}{v_\star} \left( \coth \left( \frac{\eta}{v_\star} \right) - 1 \right) \right]^{-1}.
\end{equation}
The function $y = (x (\coth x - 1))^{-1}$ has no roots and is always positive, hence we find that the Jacobian is always positive at the point $u = u_\star$.
Therefore, unless $\xi$, $\mu - \mu_\star$ or $\sigma$ is infinite, then we always have a finite and non-zero Jacobian, and thus we can locally invert the coordinate transformation.

\subsubsection{Metric}

We can use the Jacobian to obtain the metric (i.e. the Fisher) in our new coordinates, which is
\begin{equation}
    \tilde{F}_{ab} = 
    \tilde{g}_{ab} =
    \frac{1}{2 \beta^4}
    \begin{pmatrix}
        \eta^2 \sinh^2 \beta + \xi^2 \beta^2 & \eta \xi \left( \beta^2 - \sinh^2 \beta \right) \\
        \eta \xi \left( \beta^2 - \sinh^2 \beta \right) & \eta^2 \beta^2 + \xi^2 \sinh^2 \beta
    \end{pmatrix},
\end{equation}
where we have used that $v_\star = \sqrt{2}$. To verify that this has the desired behaviour, let us expand the metric around the point $\xi=\eta=0$. In this case, we obtain
\begin{equation}
    \tilde{g}_{ab} \approx \mat{1} 
    + \frac{1}{6} 
    \begin{pmatrix}
        \eta^2 & -\eta \xi \\
        - \eta \xi & \xi^2
    \end{pmatrix}
    + \mathcal{O} \left( |x|^3 \right),
\end{equation}
so indeed, in these coordinates, the metric is flat at zeroth and linear order, with the correction occurring at quadratic order.
Using the series expansion of \cref{eq:uv_approx}, this can be written as
\begin{equation}
    \tilde{g}_{ab} \approx \mat{1} 
    + \frac{1}{6} 
    \begin{pmatrix}
        \left( v - v_\star \right)^2 & -\left( u - u_\star \right) \left( v - v_\star \right) \\
        -\left( u - u_\star \right) \left( v - v_\star \right) & \left( u - u_\star \right)^2
    \end{pmatrix}
    + \ldots
\end{equation}
which is identical to \cref{eq:gaussian_metric_expansion} (since $v_\star = \sqrt{2}$), as required.

\subsection{Loss function for neural network}


In the neural network search for our coordinates, we compute the Frobenius norm (\cref{eq:flat_loss}) between the metric and the identity.
For the geodesic normal coordinates we find that this takes a particularly nice form
\begin{equation}
    \big\|\tilde{g}_{ab} - \delta_{ab}\big\|_{\mathcal{F}}^{2}
    = \left( \frac{\sinh^2 \beta}{\beta^2} - 1 \right)^2
    \approx \frac{\beta^4}{9} + \mathcal{O}(\beta^6),
\end{equation}
where the first equality is exact, and the approximation is valid for $\beta \ll 1$.
The quartic behaviour in the square of the Frobenius norm is consistent with geodesic normal coordinates having quadratic deviations from flatness near the fiducial point.

\section{Experiment Details}
\label{app:experiments}
\paragraph{Default Distillery setup.} The distillery comes with controls that users can tune for specific problems. We calibrated our setup on the Rosenbrock problem, and made slight modifications (such as changing network size or depth) for each application. A fixed simulation budget is chosen in advance. The simulations are $50$--$50$ split into Fishnets training and validation datasets. Once the Fisher matrices are learned, the flattening network is fit on the \textit{validation} set of true parameters and ensemble of learned Fishers $(\theta, \{\mat{F}\}_j)$ (the train split is discarded). {\textbf{SR data augmentation.} Once the flattening ensemble is trained, the map $\gen{\bm{\eta}}(\negv{\bm{\theta}})$ no longer depends on the underlying simulator; we therefore draw a fresh, dense, uniform sample from the prior support (typically ${\sim}5$--$10\times$ the Fishnet validation count, e.g.\ ${\sim}2{,}000$ Rosenbrock draws against ${\sim}250$ validation simulations) and pass it through the ensemble to obtain the augmented triple $(\negv{\bm{\theta}}, \bar{\gen{\bm{\eta}}}, \delta\gen{\bm{\eta}})$ used as SR inputs. Empirically this dense, well-conditioned coverage of the prior is the single most reliable lever for clean Pareto fronts and recovery of the analytic coordinates---using only the ${\sim}250$ original validation points routinely yields noisier expressions and slower convergence.} Each ensemble member is aligned to a reference member using the Procrustes alignment scheme. Symbolic regression is carried out for 120 seconds per component, and expressions are pruned with an importance-weighted Frobenius loss threshold of $0.1$.

\paragraph{Downstream neural posterior estimation.}
For each of the four scientific applications and the Rosenbrock
validation, we train a Masked Autoregressive Flow (MAF) as a Neural
Posterior Estimator (NPE) using \texttt{ltu-ili} with the \texttt{lampe}
backend \citep{papamakarios2021normalizing}. We
use $5$ MAF transforms, $50$ hidden features per transform, two MAF
repeats, learning rate $10^{-4}$, and up to $1\,000$
epochs, with early stopping by smoothed validation log-probability over a
$10$-epoch moving window. The simulation-budget sweeps span
$\{100, 500, 1\,000, 5\,000, 10\,000\}$ training simulations with corresponding batch sizes $[8, 22, 32, 32, 32]$, and average
over three random seeds. Crucially, the simulations are generated uniformly in $\theta$-space and, for the $\eta$-coordinate inference, converted via the discovered coordinate transformation. This ensures that prior volumes do not influence the inference results.  Log-losses in $\eta$-space are converted to $\theta$-space using the mean
$\log\!\big|\!\det\!\big(\partial\theta/\partial\eta\big)\big|$ over the
prior, estimated by central finite differences on $512$ samples.

\subsection{Rosenbrock}
\label{app:rosenbrock}

Consider a Gaussian in two dimensions controlled by means ${\bm{\eta}} = (\nu_0, \nu_1)$, which we would like to infer from data, and fixed diagonal covariance 
$\gmat{\Sigma} = \text{diag}(\sigma_0^2, \sigma_1^2)$. 
Assuming a uniform prior on $\theta$, the posterior distribution can be expressed as
\begin{equation}\label{eq:post_flat}
    \log p({\bm{\eta}} | \{\textbf{x}^i \}) = -\frac{1}{2} \frac{(\nu_0 - \langle x_0 \rangle)^2}{(\sigma_0 / \sqrt{n_d})^2} - \frac{1}{2} \frac{(\nu_1 - \langle x_1 \rangle)^2}{(\sigma_1 / \sqrt{n_d})^2},
\end{equation}
where $\langle a \rangle = \frac{1}{n_d}\sum_{i=1}^{n_d} a^i$, which is also Gaussian, and $n_{\rm d}$ is the number of data points.
This likelihood has a known Fisher information matrix: $\mat{F}^\prime = n_d \diag(1/\sigma_0^2, 1/\sigma_1^2)$.
Hence, in this coordinate system, by a simple rescaling we can make the Fisher information the identity.

Let us now consider a new coordinate system for which the Fisher information is not so clearly related to the identity. Let us define $\bm{\theta} = (\mu_0,\mu_1)$ such that
\begin{equation}
    \label{eq:nu_from_mu}
    \mu_0 = \nu_0, \quad \mu_1 = \nu_1 + \nu_0^2,
    \quad \Leftrightarrow \quad
    \nu_0 = \mu_0, \quad \nu_1 = \mu_1 - \mu_0^2.
\end{equation}
Applying the Jacobian of this transformation (which has determinant $1$), the
posterior in the new coordinates is given as
\begin{equation}\label{eq:post_rosen}
    \log p(\bm{\theta} | \{\textbf{x}^i \}) = -\frac{1}{2}\frac{(\mu_0 - \langle x_0 \rangle)^2}{(\sigma_0 / \sqrt{n_d})^2} - \frac{1}{2} \frac{(\mu_1 - \mu_0^2 - \langle x_1 \rangle)^2}{(\sigma_1 / \sqrt{n_d})^2},
\end{equation}
which is a non-convex Rosenbrock-like function \citep[e.g.][]{rosenbrock_automatic_1960}. In these coordinates, the Fisher matrix is given by
\begin{equation}
    \mat{F} = n_d \begin{pmatrix}
        \frac{1}{\sigma^2_0} + \frac{4 \mu_0^2}{\sigma_1^2} & \frac{-2\mu_0}{\sigma_1^2} \\
        \frac{-2\mu_0}{\sigma_1^2} & \frac{1}{\sigma_1^2}
    \end{pmatrix}.
\end{equation}

We have therefore shown that a Rosenbrock-like posterior (\cref{eq:post_rosen}), which has a complicated Fisher matrix, can be transformed into a Gaussian posterior (\cref{eq:post_flat}) with a simple Fisher matrix. Since this result is exact, we use it as a benchmark to determine whether our methodology can find the coordinates ${\bm{\eta}}$ given simulations performed in the coordinates $\bm{\theta}$.

To test the algorithm, we generate $500$ simulations over a fixed uniform prior 
$(\mu_0, \mu_1) \sim \mathcal{U}[-3,3]\times[-3, 3]$, splitting 50-50 into train and test sets.
These coordinates are then transformed to $(\nu_0, \nu_1)$ via \cref{eq:nu_from_mu}. 
We then generate $n_{\rm d}=50$ two-dimensional data points from a multivariate normal 
$\{ \textbf{x} \}_{i=1}^{n_d} \sim \mathcal{N}([\nu_0, \nu_1], \gmat{\Sigma} )$ with fixed, diagonal covariance $\gmat{\Sigma} = \mathrm{diag}({1.0, 2.0})$. For inference scaling we investigate $\gmat{\Sigma} = \mathrm{diag}({4.0, 1.0})$, which exacerbates the ``banana'' shape of the posterior, deviating from a Gaussian approximation.

\begin{figure}
    \centering
    \includegraphics[width=0.5\linewidth]{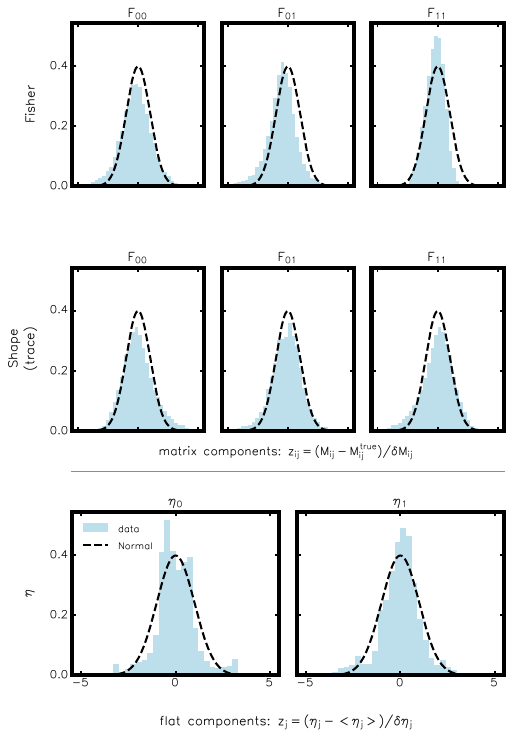}
    \caption{\textit{Top panels:} Fishnets provides an unbiased estimate of a) unique Cholesky factors and b) normalised shape of the true Rosenbrock Fisher Matrix. \textit{Bottom panel:} Learned flat coordinates $\eta$ and their ensembled-estimated uncertainties $\delta \eta$  approximately follow a Gaussian sampling distribution.}
    \label{fig:rosen-sampling-dist}
\end{figure}

\subsection{Gaussian with unknown variance}
We generate $10\,000$ simulations via $\mu \sim \mathcal{U}[-5,5]$ and
$\sigma \sim \mathcal{U}[0.2, 7.0]$, and generate
$x \sim \mathcal{N}(\mu, \sigma)$ with $\dim(x) = 50$. We adopt the same
network setup calibrated on the Rosenbrock example. 
\textbf{Results.} Using the same distillery settings as the Rosenbrock example, we recover the MDL expressions
\begin{align*}
    \eta_1 &= \mu\,(0.4\,\sigma - \exp(0.14\,\sigma))\,(0.06\,\mu^2 - 0.02\,\exp(0.95\,\sigma) - 3.0) + 7.0, \\
    \eta_2 &= 0.8\,\sigma + (0.4\,\mu^2 + 1.0\,\sigma)^{\exp(-0.13\,\sigma)} - 0.9
\end{align*}
and Frobenius expressions
\begin{align*}
  \eta_1 &= \bigl(\mu\,(0.06\,\mu^2 - 3.0)\,(0.02\,\sigma\,\exp(0.28\,\sigma) - 1.0) + (0.3\,\mu + 7.0)\,\exp(0.28\,\sigma)\bigr)\,\exp(-0.28\,\sigma), \\
  \eta_2 &= 0.8\,\sigma + (0.4\,\mu^2 - 0.04\,\mu + \sigma)^{\exp(-0.12\,\sigma)} - 0.9.
\end{align*}

\subsection{SIR Simulation Details}
\label{app:sir-simulation-details}
\begin{figure}[h!]
    \centering
    \includegraphics[width=\linewidth]{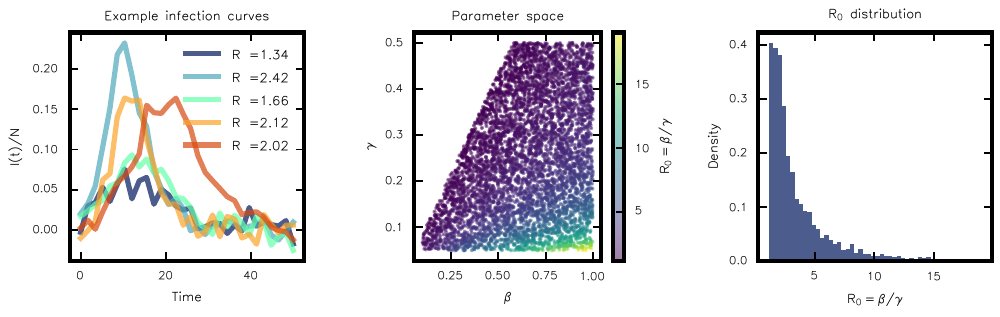}
    \caption{Example SIR epidemic ($R_0>1.0$) time-series data and parameter values, coloured by $R_0$.}
    \label{fig:sir-data-illustration}
\end{figure}
We use the classical susceptible--infectious--removed (SIR) epidemic model
\citep{kermack1927contribution,hethcote2000mathematics}, in which a closed
population of size $N$ is partitioned into susceptible $S(t)$, infectious
$I(t)$, and removed/recovered $R(t)$ compartments:
\begin{align}
\frac{{\rm d}S}{{\rm d}t} &= -\beta \frac{SI}{N}, &
\frac{{\rm d}I}{{\rm d}t} &= \beta \frac{SI}{N} - \gamma I, &
\frac{{\rm d}R}{{\rm d}t} &= \gamma I,
\end{align}
where $\beta$ is the transmission rate, $\gamma$ the recovery rate, and the basic reproduction number is $R_0=\beta/\gamma$.

\begin{figure}
    \centering
    \includegraphics[width=\linewidth]{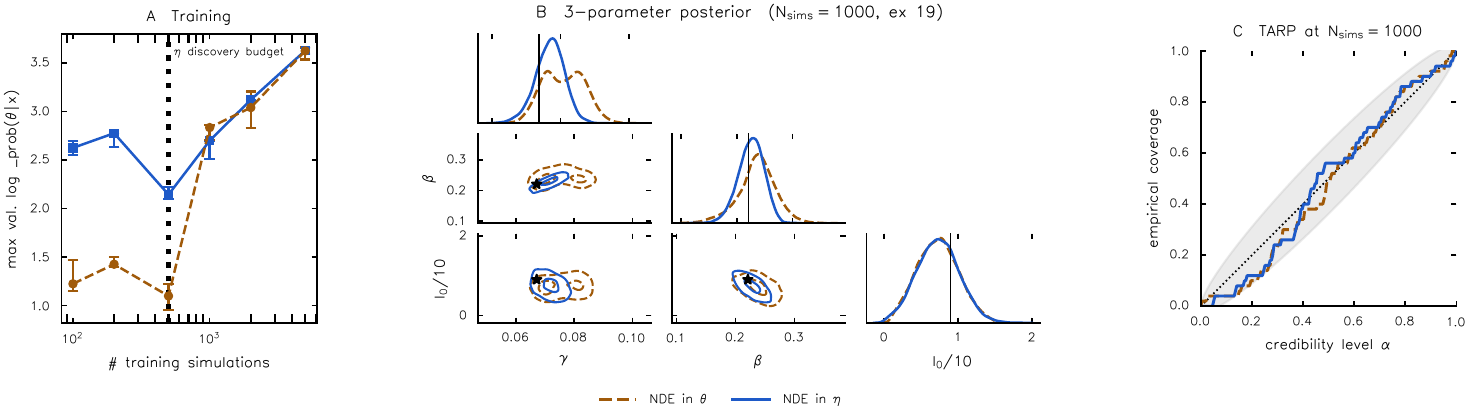}
    \caption{Supplementary details for the SIR experiment. \textit{Left:} Discovered coordinates require fewer simulations in noisier settings to achieve the same log-probability scores in downstream NDE posterior estimation in $\theta$ space. \textit{Middle:} 3D posteriors (here for $N_{\rm sim}=1000$) show good agreement with simulation truths and tighter $\eta$ constraints. \textit{Right:} both $\eta$ and $\theta$ posteriors indicate underconfident (conservative) coverage across test simulations.}
    \label{fig:sir_appendix}
\end{figure}

{Simulations use $N=5000$ with initial conditions $I(0)\sim\mathrm{Pois}(\lambda=8)$, $S(0)=N-I(0)$, $R(0)=0$. We observe the normalised infectious trajectory $I(t)/N$ at $30$ evenly spaced times over $t\in[0,50]$ with additive Gaussian noise of standard deviation $\sigma_{\rm obs}=0.05$. Parameters are sampled independently as $\beta\sim\mathcal{U}(0.1,1.0)$, $\gamma\sim\mathcal{U}(0.05,0.5)$, with the rescaled initial condition $I_0/10$ entering as a third inference coordinate; we generate $2\,500$ training and $2\,500$ test simulations.  For the epidemic-regime analysis we retain samples with $R_0 \ge 1+\delta$ with $\delta=0.15$, and discard the band $0.85 \le R_0 < 1.15$; the corresponding non-epidemic subset is $R_0 < 0.85$.} We display example simulations and parameters in Fig. \ref{fig:sir-data-illustration}. 

To demonstrate information capture in the identified $R_0$ direction, we grade posterior values of $R_0$, computed from posterior chains in $(\beta, \gamma)$. For each test observation $x^{(i)}$ with true reproduction number
$R_0^{(i)}=\beta^{(i)}/\gamma^{(i)}$, we score the posterior $q(R_0\mid x^{(i)})$
via the continuous ranked probability score (CRPS)
\begin{equation}
\mathrm{CRPS}\!\left(q,\,R_0^{(i)}\right)
\;=\;\mathbb{E}_{R\sim q}\,\bigl|R-R_0^{(i)}\bigr|
\;-\;\tfrac{1}{2}\,\mathbb{E}_{R,R'\sim q}\,\bigl|R-R'\bigr|,
\label{eq:crps-R0}
\end{equation}
and report the mean over the test set, $\overline{\mathrm{CRPS}}(R_0)=N^{-1}\sum_i \mathrm{CRPS}(q,R_0^{(i)})$.
CRPS is a strictly proper scoring rule with units of $R_0$: it
reduces to the absolute error when $q$ is a point mass, and it is uniquely
minimized in expectation when $q$ matches the true posterior over $R_0$
\citep{gneiting2007strictly}. We estimate~\eqref{eq:crps-R0} from $M$ posterior
samples $\{R_j\}_{j=1}^{M}$ using the unbiased $U$-statistic
\begin{equation}
\widehat{\mathrm{CRPS}}
\;=\;\frac{1}{M}\sum_{j}\bigl|R_j-R_0^{(i)}\bigr|
\;-\;\frac{1}{2M(M{-}1)}\sum_{j\neq k}\bigl|R_j-R_k\bigr|,
\end{equation}
with $R_j=\beta_j/\gamma_j$ obtained from posterior draws $(\beta_j,\gamma_j)\sim q(\,\cdot\mid x^{(i)})$.

\subsection{Weak Gravitational Lensing}
\label{app:WL}

For our weak lensing example we adopt the $5\,000$ noisy tomographic WL simulations from \citet{Lucas_Makinen_hybrid_2025}, generated with the \texttt{pmwd} particle-mesh code \citep{li_pmwd_2022} and collected into four redshift bins to form convergence-image data of shape $(128,128,4)$. {Simulations span a wide uniform prior $p(\Omega_m, S_8) = \mathcal{U}([0.15, 0.7] \times [0.35, 1.52])$ and additionally vary the initial-condition seed; following \citet{Lucas_Makinen_hybrid_2025} we histogram all auto- and cross-power spectra per redshift bin into a 60-component summary per simulation. Additive shape noise with variance $[0.00045021, 0.00087473, 0.00134725, 0.00183411]$ per tomographic bin is injected for downstream neural posterior estimation (Fig. \ref{fig:wl-ex-post})} The \textit{same} underlying dark matter simulations are used in the distillery and inference steps.

\begin{figure}[t]
    \centering

    \begin{subfigure}[t]{0.48\textwidth}
        \centering
        \includegraphics[width=\linewidth]{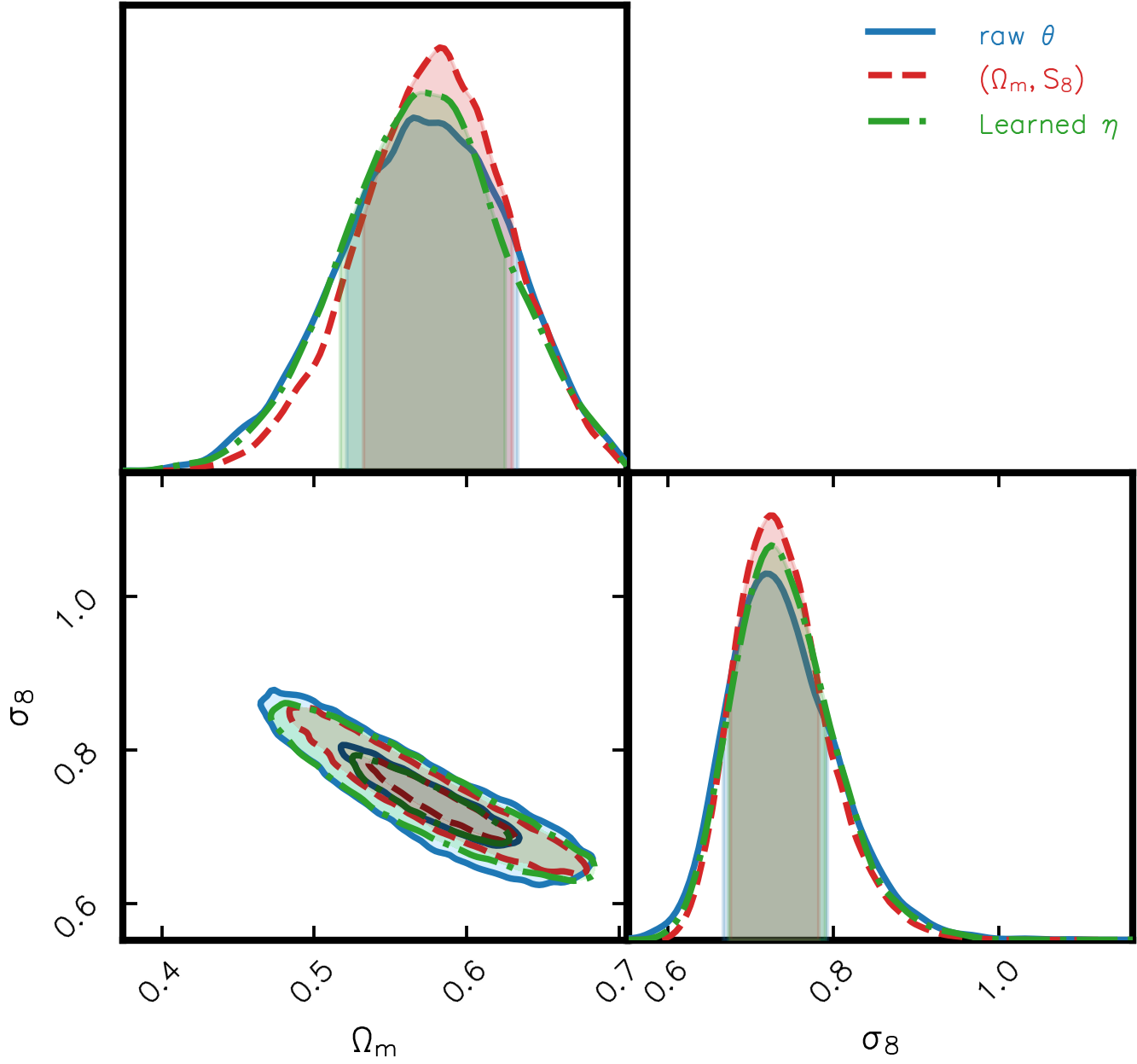}
        \label{fig:panel-a}
    \end{subfigure}
    \hfill
    \begin{subfigure}[t]{0.48\textwidth}
        \centering
        \includegraphics[width=\linewidth]{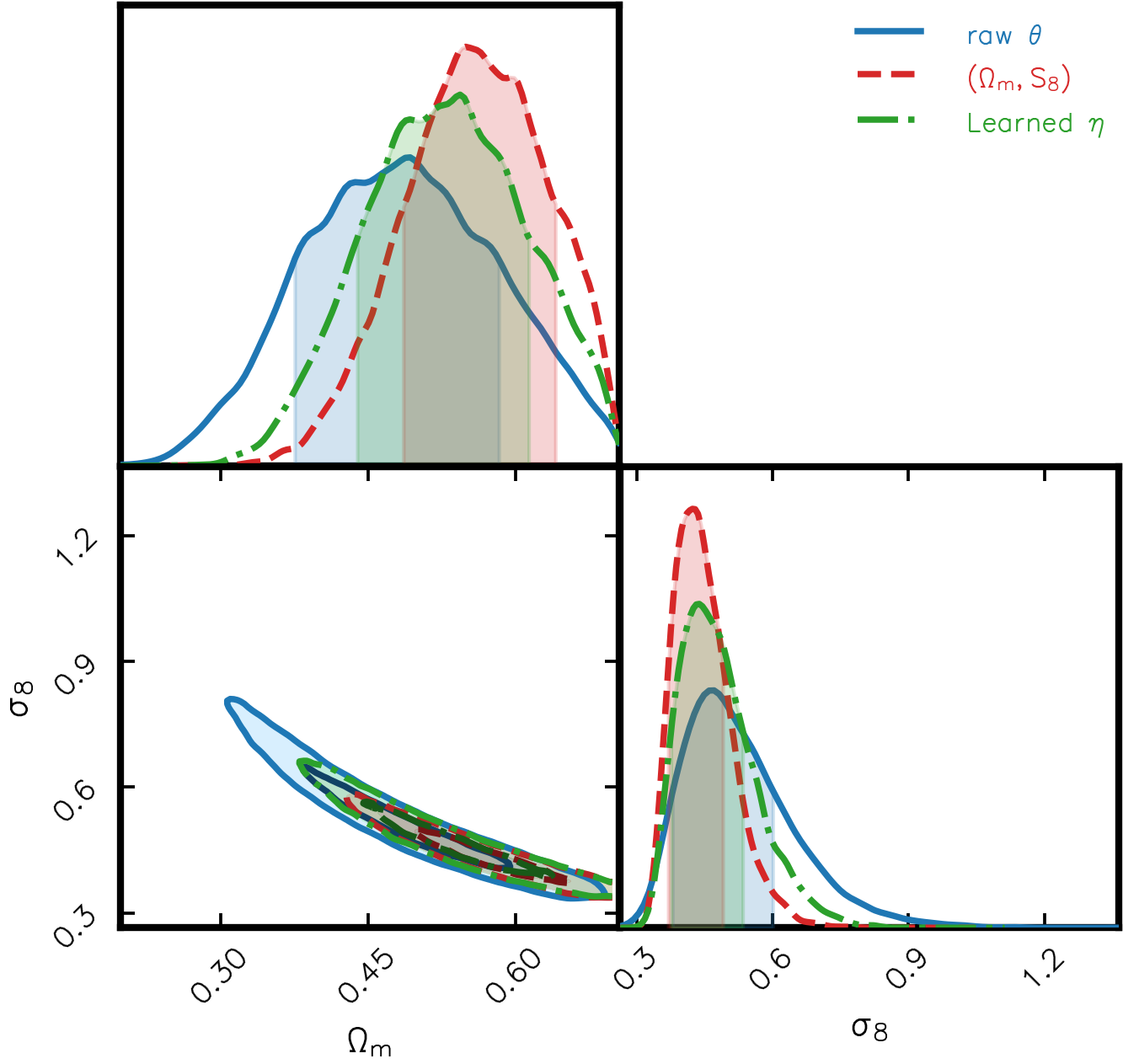}
        \label{fig:panel-b}
    \end{subfigure}

    \caption{Example NPE posteriors using learned versus conventional weak lensing coordinates. Learned coordinates trace the conventional coordinate constraints, and outperform raw $\theta$ sampling.}
    \label{fig:wl-ex-post}
\end{figure}



\subsection{Heater experiment}
\label{app:heater}
{We expand on the rank-1 industrial-heater experiment of \cref{sec:heater}, repeating the simulator and prior for self-containment and providing the hyperparameter and ablation details deferred from the main text.}

We construct a minimal synthetic experiment to demonstrate that the
Distillery can perform genuine \emph{dimensionality reduction}, leading to immense performance gains in high dimensions. A first-order resistive-heater
plant is driven by an applied voltage and current $\theta = (V, I)$, with observable 20-point time series of the temperature step response (see e.g. \cite{Incropera2011}),
\begin{equation}
  y(t) \;=\; (V \cdot I)\,(1 - e^{-t/\tau}) \;+\; \varepsilon(t),
  \qquad \varepsilon(t) \overset{\text{iid}}{\sim}
         \mathcal{N}(0, \sigma^{2}).
  \label{eq:heater_simulator2}
\end{equation}
The plant dynamics depend on $(V, I)$ \textit{only} through the dissipated power
$P = V \cdot I$, so the Fisher information matrix
$F_\theta(\theta) = (\|k\|^{2}/\sigma^{2})\,(\theta_2, \theta_1)^{\top}
(\theta_2, \theta_1)$ is structurally rank-1 \emph{everywhere}: $\theta$
parameterises a 2-D box but the data manifold is exactly 1-D. This is a common occurrence in industrial control problems where two physical dials multiply together to produce a single output. 

\paragraph{Scaling Impact.} Density estimation becomes increasingly difficult in high dimensions, requiring $N(d, \varepsilon) \;\sim\; \varepsilon^{-(2\beta + d)/(2\beta)}$ simulations to reach an error tolerance $\varepsilon$ (Stone's relation; \cite{stone1980}). If the data manifold
has intrinsic dimension $d^{*} < d$, training on the distilled
coordinates costs only $N(d^{*}, \varepsilon)$, giving an
exponential-in-$k$ saving
\begin{equation}
  \frac{N(d, \varepsilon)}{N(d^{*}, \varepsilon)}
  \;=\; \varepsilon^{-k/(2\beta)},
  \qquad k \equiv d - d^{*}.
  \label{eq:distillation_gain}
\end{equation}
For $\beta = 1$ (Lipschitz) and $\varepsilon = 10^{-2}$ this varies as
$10^{k/2}$. Explicit symbolic (or neural) distillation of the \textit{intrinsic} parameter combinations delivers this exponential gain in density estimation by effectively shrinking the parameter space. \textbf{Empirical scaling sweep.}
To make this explicit in the context of neural posterior estimation (NPE; \cite{papamakarios2021normalizing,pydelfiAlsing_2019}), we extend the heater problem to a chain-product simulator
\begin{equation}
  y(t) \;=\; \Big(\prod_{i=1}^{d} \theta_i\Big)\,
             (1 - e^{-t/\tau}) \;+\; \varepsilon(t),
  \qquad \theta_i \sim \mathcal{U}[1, 2],
\end{equation}
in which the intrinsic dimension is fixed at one for every $d$.
Figure~\ref{fig:mu-sigma-heater-scaling} shows the best validation
log-probability of two neural density estimators
trained on a fixed budget of $N_{\text{sim}} = 10^{3}$ simulations: the
raw flow with target $\theta \in \mathbb{R}^{d}$ and the distilled flow
with target $\eta = \prod_i \theta_i \in \mathbb{R}$. The raw flow's
inference quality degrades monotonically with $d$, tracing Stone's relation, while the distilled
flow is, by construction, insensitive to $d$. The gap widens at the rate
predicted by Eq.~\eqref{eq:distillation_gain}, despite the underlying
observed information being identical at every $d$.

\paragraph{{Simulator and prior.}}
{The simulator is the resistive-heater step response \cref{eq:heater_simulator} with thermal time constant $\tau = 1$, observation horizon $t \in [0, 4]$ sampled at $n_t = 20$ uniformly spaced points, and per-sample noise standard deviation $\sigma = 0.2$ ($\sigma^{2} = 0.04$ in temperature units). The prior is $\theta = (V, I) \sim \mathcal{U}[1, 2]^{2}$---a tight factor-of-two box around the operating point---with $N_{\text{sim}} = 10^{3}$ i.i.d.\ training draws and an independent $10^{3}$ for validation. The Fisher information matrix
$F_\theta(\theta) = (\|k\|^{2}/\sigma^{2})\,(\theta_2, \theta_1)^{\top}(\theta_2, \theta_1)$
is structurally rank-1 everywhere because the dynamics depend on $(V, I)$ only through the dissipated power $P = V \cdot I$.}

\paragraph{Fishnet ensemble.}
We use an ensemble of $10$ Fishnets, each a
$3$-layer MLP with hidden widths drawn uniformly in $[64, 128]$, trained
for $200$ epochs at learning rate $10^{-3}$ and batch size $64$ with
$\mathrm{patience} = 30$ on the validation MLE+Fisher objective. The
ensemble-mean Fisher at the prior centre is
\begin{equation*}
  \bar{F}_\theta \;\approx\;
  \begin{pmatrix} 0.62 & 0.58 \\ 0.58 & 0.62 \end{pmatrix},
  \qquad
  \mathrm{eigvals}(\bar{F}_\theta) \;\approx\; (0.033,\; 1.20),
\end{equation*}
giving an empirical condition number of $\sim 36$ -- consistent with a
structurally rank-1 Fisher whose minor eigenvalue is set by the
finite-data noise floor at this $\sigma$.

\paragraph{Flattening flow.}
We fit a single flattening network $\eta = f_{\phi}(\theta)$ with the log-Frobenius loss, {trained for $200$ epochs at learning rate $10^{-3}$ on the full $10^{3}$-sample training set. The asymptotic loss $\mathcal{L} \approx 0.79$ is the structural rank-1 floor of $\log\|Q^{-1} - I\|_F$: the unidentifiable direction has eigenvalue zero exactly, so a higher-rank target would drive the loss to zero. The Jacobian determinant $|\det J| \approx 0.04$ indicates that the 2D problem contains only one meaningful direction.}

\paragraph{Symbolic regression.}
{Discovered coordinates are post-processed with the standard Distillery pipeline. The two SR-recovered expressions are
\begin{align}
  \eta_{\text{nuis}}(V, I) &\;\approx\;
    +0.10\,V \;-\; 0.09\,I \;+\; 0.4
    \;\approx\; 0.10\,(V - I) + 0.4, \\
  \eta_{\text{iden}}(V, I) &\;\approx\;
    -0.60\,V \;-\; 0.60\,I \;+\; 6.0
    \;=\; -0.60\,(V + I) + 6.0,
\end{align}
with $|r|(\eta_{\text{iden}}, V{+}I) = 1.00$ and $|r|(\eta_{\text{nuis}}, V{-}I) = 1.00$ (Table~\ref{tab:heater_correlations}); cross-talk into the identifiable subspace is below $0.10$.}

\begin{table}[h]
  \centering
  \begin{tabular}{lcc}
    \toprule
    candidate axis & $|r(\eta_{\text{nuis}}, \cdot)|$
                   & $|r(\eta_{\text{iden}}, \cdot)|$ \\
    \midrule
    $V \cdot I$ \quad (true identifiable axis)
                                & $\sim 0.10$ & $\mathbf{0.953}$ \\
    $\sqrt{V \cdot I}$          & $\sim 0.10$ & $\mathbf{0.977}$ \\
    $V + I$ \;(local linearisation of $\sqrt{VI}$)
                                & $\sim 0.08$ & $\mathbf{1.000}$ \\
    $\log(V / I)$ \quad (true nuisance axis)
                                & $\mathbf{0.968}$ & $< 0.025$ \\
    $V - I$                     & $\mathbf{0.996}$ & $< 0.025$ \\
    \bottomrule
  \end{tabular}
  \caption{Pearson $|r|$ of the discovered Distillery coordinates
  $\eta_{\text{nuis}}, \eta_{\text{iden}}$ against ground-truth physics
  axes for the rank-1 heater experiment, evaluated over the validation
  set. Boldface marks the highest correlation in each row. The two
  Distillery coordinates partition the rank-1 manifold into a clean
  identifiable-and-nuisance pair with cross-talk $|r| < 0.10$.}
  \label{tab:heater_correlations}
\end{table}

The recovered $V+I$ coordinate is a local approximation to $V\cdot I$ within the prior box $[1,2]^2$.

\paragraph{NPE-MAF training (sweep).}
For the chain-product sweep
(Fig.~\ref{fig:mu-sigma-heater-scaling}), we train a single
MAF~\citep{papamakarios2017} with $5$ transforms and hidden width $50$ for raw coordinates, and a 4-component MDN for 1D distilled coordinates using the
\texttt{ltu-ili}~\citep{lemos2024ltu} \texttt{lampe} backend with learning rate $10^{-4}$, batch size
$64$, patience $30$, max epochs $1000$, and $3$ independent random
seeds per ambient dimension $d \in \{2, 3, 4, 6, 8, 10, 12\}$. Both raw
and distilled flows see identical $(\theta, y)$ pairs and identical
training budgets. The only difference is the regression target
($\theta \in \mathbb{R}^{d}$ vs $\eta = \prod_i \theta_i \in \mathbb{R}$) and NPE head parametrisation.

\subsection{Gravitational-wave inspiral waveforms (TaylorF2 warm-up)}
\label{app:gw-details}
We use TaylorF2 inspiral waveforms with non-spinning 2PN phase. The
frequency grid runs from $f_{\rm low} = 20$\,Hz to the lightest-system
ISCO frequency $f_{\rm ISCO} = (6^{3/2}\pi (m_1+m_2) M_\odot)^{-1}$ in
steps of $\Delta f = 0.5$\,Hz, capped at $1\,024$\,Hz. Waveforms are
whitened with the analytic Advanced-LIGO design PSD
$S_n(f) = 10^{-49}(x^{-4.14} + 2 + 2x^2)$, $x = f / 215\,{\rm Hz}$, for
$f \geq 10$\,Hz, with whitening factor $\sqrt{4\,\Delta f / S_n(f)}$. Real
and imaginary parts of the whitened waveform are concatenated, and PCA is
fit on a bank of $1\,000$ noiseless whitened waveforms with $40$ retained
components. Unit Gaussian noise is added in the PCA-compressed space. Mass
priors are $m_1, m_2 \sim \mathcal{U}(5, 50)\,M_\odot$ subject to
$m_1 \geq m_2$.

\paragraph{Conventional baseline.}
For comparison we also train a neural posterior estimator in the
conventional $(\mathcal{M}_c, q)$ coordinates,
\[
\mathcal{M}_c = \frac{(m_1 m_2)^{3/5}}{(m_1+m_2)^{1/5}}, \qquad
q = m_2/m_1,
\]
with their analytic inverse used to convert posterior samples back to
$(m_1, m_2)$ for evaluation.

\begin{figure}
    \centering
    \includegraphics[width=\linewidth]{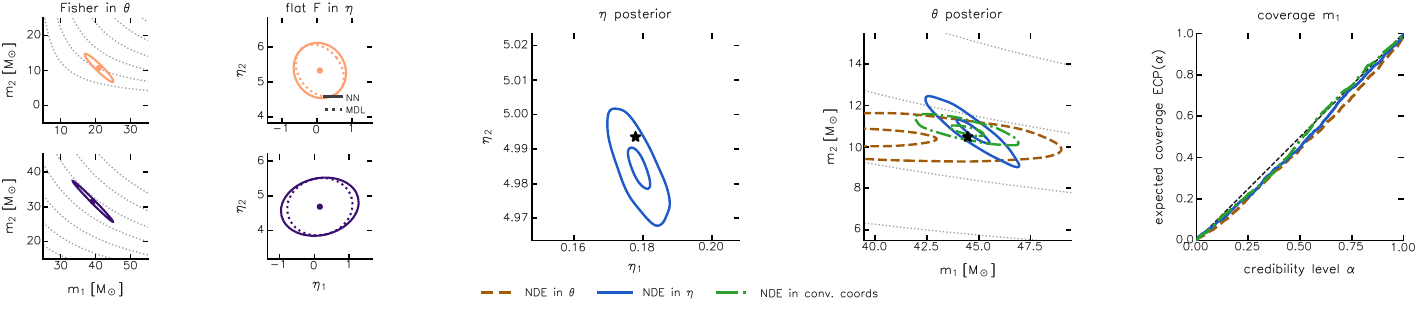}
    \caption{Benchmark GW result, \textsc{TaylorF2} (inspiral-only, 2PN, non-spinning). Same panel layout as \cref{fig:gw_imr}. The learned coordinates collapse onto a smooth bijection of the conventional chirp-mass / mass-ratio combination $(\mathcal{M}_c, q)$; posterior contours and PP-plots in $\eta$ and $(\mathcal{M}_c, q)$ are consistent with the analytic argument that the inspiral phase is dominated by $\mathcal{M}_c$ in this mass range.}
    \label{fig:gw_taylorf2}
\end{figure}

\paragraph{{Benchmarking results.}}
{For masses rescaled to $[0.1, 1.0]$, the distillery returns the symbolic coordinates of \cref{sec:applications},
$\eta_1 = 0.2\,m_1$, $\eta_2 = 5.5\,\exp(-0.08\,(m_1+m_2)^2)$, with analytic inverse
$m_1 = 5\eta_1$, $m_2 = -5\eta_1 + 2.5\sqrt{3.41 - 2\log|\eta_2|}$ used for downstream sampling. Posterior estimation in $\eta$-space matches or exceeds the chirp-mass coordinates at all simulation budgets tested, with the largest gains at low budgets.}

\subsection{Gravitational-wave inspiral--merger--ringdown
(\textsc{IMRPhenomD})}
\label{app:gw-imr-details}
{For the main GW result we replace the inspiral-only TaylorF2 model with the phenomenological inspiral--merger--ringdown waveform \textsc{IMRPhenomD} \citep{Khan2016IMRPhenomD,Husa2016IMRPhenomD}, generated through \texttt{pycbc} \citep{pycbc} on the same uniform $(m_1, m_2)$ prior and Advanced-LIGO design PSD as the TaylorF2 warm-up. The frequency grid is extended to $f_{\rm max} = 2\,048$\,Hz to capture the merger and ringdown of the heaviest systems; the lower cutoff $f_{\rm low} = 20$\,Hz, frequency step $\Delta f = 0.5$\,Hz, whitening, and PCA pipeline (40 components fit on $5\,000$ noiseless waveforms, unit Gaussian noise added in the PCA basis) are identical to \cref{app:gw-details}.}

{\paragraph{Why the chirp-mass scaling fails.}
At the heavy end of the prior, the merger amplitude and ringdown frequency ($\propto 1/(m_1+m_2)$) carry a substantial fraction of the in-band signal-to-noise, so the $\mathcal{M}_c \propto (m_1 m_2)^{3/5}(m_1+m_2)^{-1/5}$ scaling that dominates inspiral-only Fisher information is no longer the principal degeneracy direction. The distillery exploits this empirically: the symbolic outputs are functions of $M = m_1+m_2$ and $\Delta m = m_1-m_2$ with no $\mathcal{M}_c$ factor, and we verified that running SR on $(M, \mathcal{M}_c)$ inputs reproduces a similar functional form, confirming the chirp-mass dependence is genuinely absent from the post-merger Fisher rather than hidden by a coordinate symmetry.}

{\paragraph{Calibration.}
The IMR sweep ($N_{\rm sims} \in \{500, 1\,000, 2\,500, 5\,000, 10\,000\}$) shows the learned $\eta$ coordinates achieve TARP coverage within the $1\sigma$ bootstrap band across the full $\alpha \in [0,1]$ range, whereas both $(m_1, m_2)$ and $(\mathcal{M}_c, q)$ NPEs under-cover at high $\alpha$.}

\label{lastpage}
\end{document}